\def\eqref#1{equation~\ref{#1}}
\def\1{\bm{1}}
\DeclareMathAlphabet{\mathsfit}{\encodingdefault}{\sfdefault}{m}{sl}
\SetMathAlphabet{\mathsfit}{bold}{\encodingdefault}{\sfdefault}{bx}{n}
\definecolor{hidden-draw}{RGB}{106,142,189} 
\definecolor{hidden-blue}{RGB}{194,232,247} 
\definecolor{hidden-orange}{RGB}{217, 232, 252}
\newtheorem{theorem}{Theorem}[section]
\newtheorem{definition}[theorem]{Definition}
\newtheorem*{theorem*}{Theorem}
\title{Expressivity of Representation Learning on Continuous-Time Dynamic Graphs: An Information-Flow Centric Review}
\author{\name Sofiane Ennadir$^{1,2}$\thanks{Equal contribution. Source code: \url{https://github.com/king/ctdg-info-flow}} , Gabriela Zarzar Gandler$^{1,2}$\footnotemark[1] , Filip Cornell$^{1,2}$\footnotemark[1] , Lele Cao$^{1}$\thanks{Corresponding author. Email: \texttt{lele.cao@king.com} or \texttt{lelecao@microsoft.com}.} , Oleg Smirnov$^{1}$, \\
    \name Tianze Wang$^{1}$, Levente Zólyomi$^{1}$, Björn Brinne$^{1}$, Sahar Asadi$^{1}$ \\
    \addr $^{1}$AI Labs, King/Microsoft \\
    $^{2}$KTH Royal Institute of Technology 
}
\begin{document}

\maketitle

\begin{abstract}
Graphs are ubiquitous in real-world applications, ranging from social networks to biological systems, and have inspired the development of Graph Neural Networks (GNNs) for learning expressive representations. While most research has centered on static graphs, many real-world scenarios involve dynamic, temporally evolving graphs, motivating the need for Continuous-Time Dynamic Graph (CTDG) models. This paper provides a comprehensive review of Graph Representation Learning (GRL) on CTDGs with a focus on Self-Supervised Representation Learning (SSRL). We introduce a novel theoretical framework that analyzes the expressivity of CTDG models through an Information-Flow (IF) lens, quantifying their ability to propagate and encode temporal and structural information. Leveraging this framework, we categorize existing CTDG methods based on their suitability for different graph types and application scenarios. Within the same scope, we examine the design of SSRL methods tailored to CTDGs, such as predictive and contrastive approaches, highlighting their potential to mitigate the reliance on labeled data. Empirical evaluations on synthetic and real-world datasets validate our theoretical insights, demonstrating the strengths and limitations of various methods across long-range, bi-partite and community-based graphs. This work offers both a theoretical foundation and practical guidance for selecting and developing CTDG models, advancing the understanding of GRL in dynamic settings.
\end{abstract}

\section{Introduction}
Graph-structured data is prevalent across various domains such as chemoinformatics, bioinformatics, and social network analysis. These domains often require sophisticated machine learning approaches, driving the development of Graph Representation Learning (GRL). 
At its core, GRL aims to embed graph structures into low-dimensional vector spaces, preserving essential structural and semantic features. 
This embedding facilitates tasks such as node classification (e.g., music genre prediction~\citep{kumar2019predicting}), link prediction (e.g., product recommendation~\citep{wu2019session}), and graph classification (e.g., drug discovery~\citep{kearnes2016molecular}). 
A cornerstone of this field is the Message-Passing (MP) paradigm~\citep{gilmer2017}, where nodes iteratively exchange and aggregate information from their neighbors to learn representations.

While much of the research has focused on static graphs~\citep{Kipf:2017tc, Velickovic:2018we, xu2019powerful}, many real-world applications involve evolving graph structures. 
Such graphs, termed Dynamic Graphs (DGs), undergo changes over time through node/edge addition/deletion or node feature updates — collectively referred to as {\it events}~\citep{seo2018structured, bai2021a3t}. 
DGs can be categorized as Discrete-Time DGs (DTDGs) or Continuous-Time DGs (CTDGs), distinguished by their treatment of temporal granularity. 
While DTDGs aggregate events into fixed time intervals, CTDGs accommodate temporally irregular events, offering a more flexible and widely adopted representation for DGs. 
Learning on CTDGs presents unique challenges, such as capturing temporal dependencies and handling sparse, irregular event data.

Many methods have been proposed to address these challenges, often extending the MP framework to incorporate temporal dynamics or novel architectures. 
However, these methods typically require large amounts of training data to achieve satisfactory performance. 
This poses a significant limitation in domains like recommender systems, where the graph is often sparse, and observed interactions (e.g., purchasing events) represent a small fraction of possible connections. Under such conditions, models risk overfitting and struggle to generalize effectively.

To address these limitations, Self-Supervised Representation Learning (SSRL) has emerged as a promising paradigm, leveraging unlabeled data to construct auxiliary pretext tasks for model training. In the context of CTDGs, SSRL tasks often derive supervision from the temporal sequence of events, such as predicting the next link or interaction. This approach mitigates the reliance on labeled data and enables models to learn robust representations. Despite its potential, SSRL for CTDGs remains underexplored compared to its adoption in other areas like Computer Vision (CV) and Natural Language Processing (NLP).

This paper presents a review of methods for learning representations of CTDGs with an emphasis on SSRL approaches. 
While previous surveys have examined DG representation learning~\citep{gravina2024deep, kazemi2020representation}, they often focus on general methods without diving deeply into the unique challenges and opportunities associated with applying SSRL to CTDGs. 
Our work fills this gap by offering a categorization framework based on their theoretical capability to represent specific families of graphs, particularly those commonly encountered in specialized domains. 
For instance, in recommender systems, the user-item interaction graph is typically a sparse bipartite graph exhibiting a power-law degree distribution~\citep{lu2012recommender}, whereas social networks often form denser small-world networks characterized by high clustering coefficients and short average path lengths~\citep{watts1998collective}. 

We begin by introducing a general formulation for CTDG and the associated learning methods.
Building upon the concept of Information Flow (IF) in MP Neural Networks (MPNNs) \citep{gutteridge2023drew}, we derive a theoretical framework that quantify their {\it expressivity} — the ability to capture and propagate information across temporal events.
Focusing on approaches that leverage inherent graph dynamics, we exclude methods relying solely on sequence modeling techniques such as ~\citep{du2016recurrent, sun2019bert4rec}. 
The theoretical analysis of expressivity enables us to categorize existing approaches based on their architectural components and the structural properties of the graphs they are best suited for. 
We then provide a comprehensive review of these methods, structured according to this categorization, with a particular emphasis on SSRL approaches for CTDGs. 
Finally, we validate our theoretical insights through experiments, offering empirical results on a representative subset of the reviewed methods. Our contributions are as follows:

\begin{itemize}
    \item {\bf Theoretical expressivity framework}: We propose a framework for analyzing the expressivity of CTDG methods, with a focus on the upper bounds of IF in temporal MP schemes.
    \item {\bf Method categorization and review}: We review and categorize existing methods based on their performance with specific families of graphs from our theoretical insights.
    \item {\bf Empirical validation}: We empirically demonstrate the validity of the derived bounds and illustrate key insights through evaluations of several representative methods.
\end{itemize}

\section{Theoretical Expressivity Framework}
A static graph is defined as $G = (V,E)$, where $V$ is the set of vertices and $E$ the set of edges. Let $n=|V|$ and $m=|E|$ denote the number of vertices and edges, respectively, and $\mathcal{N}(u) = \{ v \colon (v,u) \in E\}$ the set of neighbors of a node $u \in V$. 
The degree of a node $u$ is $|\mathcal{N}(u)|$. 
A graph is often represented by its adjacency matrix $\mathbf{A} \in \mathbb{R}^{n \times n}$, where the $(i,j)$-th entry denotes the weight of the edge between the $i$-th and $j$-th nodes, or $0$ if no edge exists. 
Node features are represented by $\mathbf{X} \in \mathbb{R}^{n \times D}$, where $D$ is the feature dimensionality, and the $i$-th row of $\mathbf{X}$ corresponds to the features of the $i$-th node.

For DGs, the graph at time $t$ is represented as $G_t$ = $(V, E_t)$, where $E_t$ is the set of edges at time $t$. 
Events at time $t+1$ can alter the topology by adding or removing edges or nodes, resulting in $G_{t+1} =(V,E_{t+1})$. 
To maintain consistency, we treat the node set $V$ as constant over time by assuming nodes with zero degree exist at the start and can transition to active participation as events unfold.

The upcoming formulation will focus on scenarios where the temporal evolution of the graph is primarily driven by the irreversible addition of new edges, as on e-commerce platforms where purchasing events become a permanent part of the system’s knowledge. 
In a reversible scenario like social networks, edge-removal events such as ``unfriending'' can be modeled as adding an edge with a negative or reverse signal, as this action, combined with the earlier “friending” behavior, conveys information distinct from simply deleting the edge. 
This perspective ensures that both positive (edge additions) and negative (edge removals) interactions are captured in this work as key mechanisms underlying real-world data-generating processes.

\subsection{Key Components of CTDG Representation Learning} \label{sec:components}
To address the dynamic nature of CTDGs and their inherent temporal aspects, several recent works, such as TGN~\citep{rossi2020temporal}, TGAT~\citep{Xu2020Inductive}, and CTAN~\citep{pmlr-v235-gravina24a}, have proposed adapted MP mechanisms. 
They allow nodes to communicate and update their states based on both current events and historical interactions. 
Specifically, for each node $u$, its temporal neighborhood $\mathcal{N}(u, t)$ at time $t$ is defined to include all nodes that have interacted with $u$ within a specified time window (also referred to as the context window) leading up to $t$. 
This temporal neighborhood captures the temporal aspect of interactions, ensuring that node updates consider both recent and relevant historical information.

We intend to propose a general framework that encompasses the various approaches relying on temporal MP to facilitate learning on CTDGs. This framework is built on two essential components:
\begin{enumerate}[label=\textbf{(\roman*)}]
    \item {\bf Node representation}: At any given time $t$, each node $u$ has a representation $h_u(t)$ in the embedding space. This representation is used for various pretraining tasks or downstream applications.
    \item {\bf Node memory}: Each node $u$ maintains a memory state, denoted as $s_u(t)$, which evolves over time to track the node’s historical interactions. While some methods leverage this memory explicitly, others rely solely on the node’s current representation.
\end{enumerate}

When an event $\mathcal{E} = (u, v, e_{u,v}^t)$ occurs at time $t$ between nodes $u$ and $v$, where $e_{u,v}^t$ captures the event’s features, updates are performed in two stages: 
\begin{enumerate}[label=\textbf{(\roman*)}]
    \item {\bf The nodes directly involved in the event} (i.e., $u$ and $v$) are updated to reflect the new interaction. The generated message directly updates their corresponding memory based on the event's features $e_{u,v}^t$ and also their previous memories in time $t^{-}$. This can be written as:
        \begin{align}
            s_u(t) = \textsc{MemUpd}([s_u(t^{-}), s_v(t^{-}), t-t^{-}, e_{u,v}^t]);\hspace{0.1em}
            s_v(t) = \textsc{MemUpd}([s_v(t^{-}), s_u(t^{-}), t-t^{-}, e_{u,v}^t])
        \end{align}
        
    where function $\textsc{MemUpd}$ varies across methods; some use GRUs or LSTMs~\citep{Kumar_2019}, while others opt for a simple identity function~\citep{rossi2020temporal}.
    \item {\bf All other nodes} are updated through a MP mechanism designed to propagate the newly introduced information across the graph. This ensures that the entire network incorporates the updated information, extending beyond the immediately affected nodes. We begin by defining the concept of {\it temporal neighborhood} of each node $u$ at time $t$ as:
        \begin{equation}
            \mathcal{N}(u, t) = \{ (v, e_{u,v}^{t'}, t^\prime) ~|~ \exists (u, v, t^\prime) \in G_t  \},
        \end{equation}
    where $\mathcal{N}(u, t)$ captures all nodes $v$ that have interacted with $u$ over time, along with the corresponding interaction features $e_{u,v}^{t'}$ and timestamps $t^\prime$. 
    Based on this temporal neighborhood, the representation of node $u$ is updated as follows:
        \begin{align}
            \tilde{h}_v^{(\ell)}(t) & = \textsc{Agg}^{(\ell)}(\{\!\!\{ (h_u^{(\ell-1)}(t), t-t^\prime, e) \mid (u, e, t^\prime) \in \mathcal{N}(v, t)\}\!\!\})\\
            h_v^{(\ell)}(t) & = \textsc{Update}^{(\ell)}\left(h_v^{(\ell-1)}(t), \tilde{h}_v^{(\ell)}(t)\right),
        \end{align}
    where $\textsc{Agg}$ is a permutation-invariant function, such as summation or attention, that maps node $u$'s  neighbors to an aggregated vector, which is passed to the $\textsc{Update}$ function, producing the updated representation for $u$.
    Different variants of these functions have been studied in the literature, such as  graph convolution (GCN)~\citep{Kipf:2017tc}, or an attention-based mechanism~\citep{Velickovic:2018we}, where each considered neighbor is aggregated based on a learned attention value.
\end{enumerate}

In addition to MP components, a {\it temporal projection} framework is introduced to account for phases of inactivity when a node is not involved in any event.
This ensures the representation of inactive nodes continue to evolve over time, capturing the temporal dependencies critical for downstream tasks.
The temporal projection can take various forms, such as a simple Multi-Layer Perceptron (MLP)~\citep{Kumar_2019}, an attention-based module~\citep{tgn_icml_grl2020}, or other temporal modeling schemes. 
For simplicity in our theoretical analysis, an MLP-based projection is often assumed, where the time difference and the current representation are used to update the node's representation.

\subsection{Expressivity Framework Based on Information Flow (IF)} \label{sec:info_flow}
The aforementioned update scheme illustrates that a node's representation in DGs evolves temporally, with its trajectory heavily influenced by the composition and activity level of its local neighborhood. 
Nodes that are consistently involved in events or positioned within close proximity to evolving nodes demonstrate particular dynamic representations over time, highlighting the interdependent interplay between network structure and temporal dynamics in shaping node embeddings. 
In this study, we aim to extend our understanding from a rigorous theoretical perspective. 
Our primary objective is to elucidate the precise mechanisms by which individual components of the update scheme influence node representations. 
By decomposing the update process and analyzing its components, we seek to develop a comprehensive mathematical framework that captures the essence of temporal node embedding evolution. 
This theoretical framework serves multiple purposes. Firstly, it provides insights into how different methods for CTDG representation learning generate distinct node embeddings. 
By identifying the key factors that drive these differences, we can better understand the strengths and limitations of these methods. 
Secondly, the framework enables us to predict and explain the performance of different methods across various graph types (e.g., sparse, homogeneous) and applications (e.g., recommendation systems, social networks).

The traditional Weisfeiler-Lehman (WL) framework, which was originally built on ideas from the Weisfeiler-Lehman isomorphism test, and further generalized to higher order variants~\citep{morris2017glocalized}, has attracted a lot of attention to modelize graph classifier's capabilities~\cite{morris2019weisfeiler}. 
This approach was recently extended to temporal graphs as temporal-WL~\citep{souza2022provablyexpressivetemporalgraph}, which iteratively refines node color using node/edge features and timestamps to distinguish non-isomorphic graphs. 
While effective in some scenarios, the temporal-WL struggles to capture the progression of information over time, particularly the differential impact and propagation of events across snapshots. 
Its reliance on specific node/edge features further limits its generalization in high-dimensional, real-world settings. 
This limitation renders the approach not adapted for our aim to understand the effect of each model's components on a node's evolving representation. 
In contrast, the IF view focuses on how events propagate through consecutive graph snapshots, modeling their influence on node/edge representations over time. 
By examining embedding changes, this view provides a more nuanced understanding of temporal dependencies and the evolution of graph structure, offering practical insights into the applicability of different techniques in dynamic, event-driven contexts.

Specifically, our goal is to examine the change in node embeddings between consecutive time points to understand how events impact the node representations. 
Naturally and intuitively, the nodes directly involved in the events are expected to undergo the most significant changes in their embeddings, while the representations of other nodes are less affected. 
This analysis provides valuable insights into how the graph's structure evolves in response to events, offering a detailed view of how different parts of the graph react. 
By studying the evolution of node embeddings, we can also assess how various methods handle information propagation across the graph. 
This sheds light on the strengths and limitations of different techniques, offering guidance on their applicability to specific use cases where certain types of event-driven changes are more prevalent.

Given a graph-based function $f: (\mathcal{A}, \mathcal{X}) \rightarrow \mathcal{Y}$, which produces embeddings from dynamic graph snapshots, we assess the quantitative differences $d_{\mathcal{Y}} (f_u(G_{t-1}), f_u(G_{t}))$ using a distance metric $d_{\mathcal{Y}}$ in the embedding space. Based on the graph node distribution $\mathcal{D}_{V}$, we define the {\it flow quantity} as:
\begin{center}
$ \mathcal{F}_{u}[f] =  \mathop{\mathbb{E}}_{\substack{u \sim \mathcal{D}_{V} }} [d_{\mathcal{Y}} (f_u(G_{t+1}), f_u(G_{t})) ]$.
\end{center}
This {\it flow quantity} is influenced by both the graph function $f$ and the node distribution of the $t$-th DG snapshot. 
While the exact computation of $\mathcal{F}_u[f]$ is challenging due to the dynamic nature of the graph, we can approach it from an upper-bound perspective efficiently. 
Notably, this analysis assumes the degree distribution of nodes remains relatively stable between consecutive time snapshots, as changes in degree distribution primarily occur over longer intervals. 
This stability assumption simplifies the derivation, allowing us to focus on localized structural changes in CTDGs.

\begin{definition}[Continuous IF] \label{def:info_flow}
Let's consider the CTDG-based function $f: (\mathcal{A}, \mathcal{X}) \rightarrow \mathcal{Y}$, we say that node $u$ is $\gamma$-flowing if $\mathcal{F}_{u}[f] \leq \gamma.$
\end{definition}
We consider $\lVert \cdot \lVert$ as a suitable $p$-norm that induces a distance metric $d_{\mathcal{Y}}(\cdot, \cdot)$ in the output embedding space $\mathcal{Y}$. 
Using this definition, we aim to compute an upper bound, denoted as $\gamma$, which quantifies the potential variation in a node's representation following an event. 
This upper bound serves as an indicator of how much a node's embedding may be affected by modifications in the graph's structure, providing a measure of the sensitivity of the representation to such events. 
Since all $p$-norms in finite-dimensional spaces are equivalent up to a multiplicative constant, the choice of a specific norm $\lVert \cdot \lVert_p$ affects the upper bound $\gamma$ only by such a constant. 
Throughout our analysis, $\lVert \cdot \lVert$ denotes
the Euclidean (resp., spectral) norm.

\begin{theorem}[GCN-based aggregation]
\label{thm_gcn}
Let's consider a CTDG-based function $f: (\mathcal{A}, \mathcal{X}) \rightarrow \mathcal{Y}$ based on $L$ GCN-like layers. After an event between node $i$ and another node, the following properties hold for any node $u$ not involved in the event:
\end{theorem}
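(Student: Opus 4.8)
The plan is to track how the perturbation introduced by the event propagates, layer by layer, through the $L$ GCN updates, bounding its magnitude by the Lipschitz constants of the per-layer maps together with powers of the normalized adjacency matrix. I would first write the GCN update in the standard form $h_v^{(\ell)}(t) = \sigma\bigl(\sum_{w}\hat{\mathbf{A}}_{vw}\,\mathbf{W}^{(\ell)} h_w^{(\ell-1)}(t)\bigr)$, where $\hat{\mathbf{A}} = \mathbf{D}^{-1/2}(\mathbf{A}+\mathbf{I})\mathbf{D}^{-1/2}$ is the symmetrically normalized adjacency. The event between $i$ and its partner enters only through the first (memory) update stage, so at the input layer the perturbation $\Delta h_w^{(0)} := h_w^{(0)}(t+1)-h_w^{(0)}(t)$ is supported on the directly involved nodes and vanishes for every other $w$. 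Invoking the stated stability assumption on the degree distribution, I would treat $\hat{\mathbf{A}}$ as fixed across the two snapshots, so that the entire change in $u$'s embedding is driven by this localized input perturbation propagating through a common operator.

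Next I would establish the core recursive inequality. For a node $u$ not involved in the event,
\begin{equation}
\bigl\lVert \Delta h_u^{(\ell)} \bigr\rVert = \Bigl\lVert \sigma\Bigl(\textstyle\sum_{w}\hat{\mathbf{A}}_{uw}\mathbf{W}^{(\ell)} h_w^{(\ell-1)}(t+1)\Bigr) - \sigma\Bigl(\textstyle\sum_{w}\hat{\mathbf{A}}_{uw}\mathbf{W}^{(\ell)} h_w^{(\ell-1)}(t)\Bigr)\Bigr\rVert ,
\end{equation}
and applying the $C_\sigma$-Lipschitz property of $\sigma$, submultiplicativity of the spectral norm, and the triangle inequality yields
\begin{equation}
\bigl\lVert \Delta h_u^{(\ell)} \bigr\rVert \le C_\sigma \lVert \mathbf{W}^{(\ell)} \rVert \sum_{w}\hat{\mathbf{A}}_{uw}\,\bigl\lVert \Delta h_w^{(\ell-1)} \bigr\rVert .
\end{equation}
This is exactly one step of a nonnegative linear operator applied to the vector of per-node perturbation norms, with $\hat{\mathbf{A}}$ as the propagation matrix and $C_\sigma\lVert\mathbf{W}^{(\ell)}\rVert$ as a scalar gain.

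I would then unroll this recursion from $\ell = L$ down to $\ell = 0$. Because $\Delta h^{(0)}$ is supported only on node $i$ (and, by the triangle inequality, on the partner node, whose contribution adds analogously), the iterated bound collapses, after taking the expectation over $\mathcal{D}_V$, to
\begin{equation}
\mathcal{F}_u[f] \le \gamma_u := C_\sigma^{L}\Bigl(\textstyle\prod_{\ell=1}^{L}\lVert \mathbf{W}^{(\ell)}\rVert\Bigr)\,[\hat{\mathbf{A}}^{L}]_{ui}\,\bigl\lVert \Delta h_i^{(0)} \bigr\rVert .
\end{equation}
The locality property then follows immediately: since $[\hat{\mathbf{A}}^{L}]_{ui}$ aggregates weighted walks of length $L$ between $u$ and $i$, it vanishes whenever the geodesic distance $d(i,u)$ exceeds $L$, so every such $u$ is $0$-flowing and the perturbation has strictly bounded spatial support.

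The main obstacle I anticipate is cleanly isolating the two distinct sources of change so that the stability assumption can be invoked rigorously. Adding one edge perturbs both the memory input at $i$ and the normalization factors $\mathbf{D}^{-1/2}$ in rows and columns $i$ and its partner; I would need to argue that, under approximately stable degrees, the structural perturbation to $\hat{\mathbf{A}}$ is of second order and can be absorbed into the constant, leaving the representational perturbation $\Delta h_i^{(0)}$ as the dominant driver. A secondary technical point is that the temporal aggregation also consumes the elapsed-time features $t-t'$; for nodes outside the event these inputs are unchanged across the two snapshots, so they contribute nothing to $\Delta h^{(\ell)}$ and are safely folded into $C_\sigma$.
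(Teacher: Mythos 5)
There is a genuine gap in your argument, and it changes the conclusion qualitatively. You assume that the input-layer perturbation $\Delta h_w^{(0)}$ is supported only on the two nodes directly involved in the event and ``vanishes for every other $w$,'' and later that the elapsed-time features are unchanged across the two snapshots and can be folded into $C_\sigma$. In the paper's setup this is false: the framework includes a temporal projection (a linear/MLP map with weight $W_t$) that updates \emph{every} node's base representation as time advances from $t$ to $t+1$, because the elapsed-time inputs $t-t'$ change for all nodes, not just those touching the event. Consequently the base-level perturbation satisfies $\lVert h_{v,t+1}^{(0)} - h_{v,t}^{(0)} \rVert \le \lVert W_t \rVert$ for \emph{all} $v$, and unrolling the same Lipschitz recursion you set up yields, in the case $L < d(u,i)$, the nonzero bound $\hat{w}_u \lVert W_t \rVert \prod_{l=1}^L \lVert W^{(l)} \rVert$ of the theorem, where $\hat{w}_u$ collects the degree-normalized walk weights over $u$'s $L$-hop neighborhood. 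Your conclusion that such nodes are $0$-flowing because $[\hat{\mathbf{A}}^L]_{ui}=0$ contradicts the statement being proved; the whole point of the first bound is that distant nodes still flow, driven purely by the temporal projection.

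The same omission propagates to the second case. The paper's proof, which otherwise follows your recursion (GCN update with $1/\sqrt{(1+\deg(u))(1+\deg(v))}$ normalization, $1$-Lipschitz activations, triangle inequality, unrolling), splits the final sum over the $L$-hop neighborhood into the non-event nodes, each contributing $\lVert W_t \rVert$ and aggregating to $\hat{w}_u \lVert W_t \rVert$, plus the event node $i$, whose memory update contributes $\hat{w}_{u,i}\,\Delta_{t,t+1}(s_i)$ along the normalized shortest path. Your bound $C_\sigma^L \bigl(\prod_{\ell=1}^L \lVert \mathbf{W}^{(\ell)} \rVert\bigr) [\hat{\mathbf{A}}^L]_{ui} \lVert \Delta h_i^{(0)} \rVert$ recovers only the second of these two terms (with $[\hat{\mathbf{A}}^L]_{ui}$ playing the role of $\hat{w}_{u,i}$) and drops the $\hat{w}_u \lVert W_t \rVert$ term entirely. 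To repair the proof you need only replace your supported-at-$i$ base case with the uniform base case $\lVert \Delta h_v^{(0)} \rVert \le \lVert W_t \rVert$ for $v \neq i$ and $\lVert \Delta h_i^{(0)} \rVert \le \lVert W_t \rVert + \Delta_{t,t+1}(s_i)$ (or treat $i$'s memory change separately as the paper does), then unroll; your handling of the stability of $\hat{\mathbf{A}}$ across snapshots is consistent with the paper's stable-degree assumption and is not the problem.
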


\begin{itemize}
    \item When $L < d(u,i)$, we have $u$ is $\gamma$-flowing with
     \begin{equation*}
        d_{\mathcal{Y}} (f_u(G_{t+1}), f_u(G_{t})) \leq \hat{w}_u \lVert W_t \lVert \prod\nolimits_{l=1}^L \lVert W^{(l)}\lVert;
    \end{equation*}
    \item When $L \geq d(u,i)$, we have $u$ is $\gamma$-flowing with 
    \begin{equation*}
        d_{\mathcal{Y}} (f_u(G_{t+1}), f_u(G_{t})) \leq \prod\nolimits_{l=1}^L \lVert W^{(l)}\lVert  \big[ \hat{w}_u \lVert W_t \lVert + \hat{w}_{u, i} \Delta_{t, t+1}(s_i)\big],
    \end{equation*}
where $\hat{w}_u$ is the sum of temporal normalized walks of length $(L-1)$ starting from $u$; 
the term $\hat{w}_{u, i}$ denotes the normalized shortest path between $u$ and $i$; 
the terms $W_t$ and $W^{(l)}$ are weight matrices involved in the propagation, with $W_t$ capturing temporal updates; 
${\Delta}_{t, t+1}(s_i)$ is the difference in memory state for node $i$, as introduced by the event.
\end{itemize}

In a nutshell, Theorem \ref{thm_gcn} provides an upper bound with respect to Definition \ref{def:info_flow} for two distinct cases. 
The first occurs when the node $u$ is outside the $L$-hop neighborhood of the node involved in the event, and the second occurs when it is within the neighborhood. 
This distinction has practical implications: in some instances, such as gene mutation, the next event is likely to occur within the same neighborhood, while in others, like recommender systems, it may happen in a completely different part of the graph. 
Consequently, understanding how each component affects the evolving node representation is essential for evaluating its applicability to specific scenarios. 
While Theorem \ref{thm_gcn} focuses on the GCN-based aggregation scheme, these results can also be adapted to more general attention-based aggregation model described in Section \ref{sec:components}.

\begin{theorem}[Attention-based aggregation]
\label{thm_attention}
Let's consider a CTDG function $f: (\mathcal{A}, \mathcal{X}) \rightarrow \mathcal{Y}$ based on $L$ attention-based layers. After an event between node $i$ and another node, the following properties hold for any node $u$ not involved in the event:
\end{theorem}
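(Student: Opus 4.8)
The plan is to mirror the proof strategy of Theorem~\ref{thm_gcn}, replacing the fixed, degree-based normalization coefficients of the GCN aggregator with the data-dependent attention coefficients $\alpha_{vu}^{(\ell)}$ produced by the \textsc{Agg} layer. Concretely, I would write each attention layer as $\tilde h_v^{(\ell)}(t)=\sum_{(u,e,t')\in\mathcal N(v,t)}\alpha_{vu}^{(\ell)}\,\phi(h_u^{(\ell-1)}(t),t-t',e)$ followed by $h_v^{(\ell)}(t)=\textsc{Update}^{(\ell)}(h_v^{(\ell-1)}(t),\tilde h_v^{(\ell)}(t))$, and exploit two structural facts: the attention weights satisfy $\alpha_{vu}^{(\ell)}\in[0,1]$ with $\sum_u\alpha_{vu}^{(\ell)}=1$ (so aggregation is a convex combination, \ie an averaging operator with operator norm $\le 1$), while the \textsc{Update} and message maps are Lipschitz with their linear parts captured by $W^{(\ell)}$ (and by $W_t$ for the temporal projection).

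First I would establish a single-layer perturbation inequality: comparing the layer output on $G_{t+1}$ and $G_t$, the change at node $v$ is bounded by $\lVert W^{(\ell)}\rVert$ times the attention-weighted sum of the changes at its temporal neighbors, plus the self-contribution from $h_v^{(\ell-1)}$. This is the attention analogue of the GCN contraction step, except the diffusion operator is now the row-stochastic attention matrix rather than the symmetrically normalized adjacency.

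Next I would unroll the recursion across the $L$ layers. Iterating the single-layer bound expresses the total change at $u$ as a sum over temporal walks emanating from $u$ of length up to $L$, each walk contributing the product of its attention coefficients times $\prod_{l=1}^L\lVert W^{(l)}\rVert$. Defining $\hat w_u$ as the attention-weighted sum of such walks of length $L-1$ and $\hat w_{u,i}$ as the attention-weighted shortest-path coefficient recovers exactly the quantities in the statement. The case split then follows from locality of the perturbation: the event injects a change only at node $i$ (through $\Delta_{t,t+1}(s_i)$) and at $u$ itself via the temporal projection $W_t$. When $L<d(u,i)$ no walk of length $\le L$ connects $u$ to $i$, so the event-induced term cannot reach $u$ and only the self-term $\hat w_u\lVert W_t\rVert$ survives; when $L\ge d(u,i)$ the shortest path carries $\Delta_{t,t+1}(s_i)$ to $u$, adding the $\hat w_{u,i}\Delta_{t,t+1}(s_i)$ term.

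The main obstacle, and the essential difference from the GCN case, is that the coefficients $\alpha_{vu}^{(\ell)}$ are themselves functions of the node representations, which the event perturbs; hence the attention matrices on $G_{t+1}$ and $G_t$ do not coincide. I would control this by bounding the Lipschitz constant of the softmax map (its Jacobian $\mathrm{diag}(p)-pp^{\top}$ has bounded spectral norm), so that the perturbation of the coefficients is dominated by the perturbation of the representations already being tracked, and then fold this higher-order contribution into the leading estimate. Alternatively, invoking the stability assumption stated just before Definition~\ref{def:info_flow}, the attention pattern between consecutive snapshots may be treated as fixed, rendering the coefficient perturbation negligible at first order. Verifying that this term does not inflate the bound beyond the factor $\prod_{l=1}^L\lVert W^{(l)}\rVert$ is the delicate part of the argument.
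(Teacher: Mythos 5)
You have not proved the stated theorem: your plan reproves (a variant of) Theorem~\ref{thm_gcn}. Unrolling the recursion over temporal walks and defining $\hat{w}_u$ and $\hat{w}_{u,i}$ recovers the quantities of the \emph{GCN} bound, but the attention-based bound you are asked to establish has a different shape: $deg(u)\big[\lVert W_t\rVert + B\lVert W_t\rVert^2\big]$, plus $\Delta_{t,t+1}(s_i)$ when $L \geq d(u,i)$ --- there is no factor $\prod_{l=1}^{L}\lVert W^{(l)}\rVert$, no walk or shortest-path terms, and instead a uniform bound $B$ on the representation space appears. So your claim that the walk expansion ``recovers exactly the quantities in the statement'' is false for this statement. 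Note also an internal tension in your plan: if you genuinely exploit row-stochasticity (a convex combination has operator norm at most $1$), the sum over neighbors collapses to a maximum and the $deg(u)$ factor cannot arise; the paper obtains $deg(u)$ precisely by bounding each neighbor's contribution separately, using only $\alpha_k \leq 1$, with every neighbor's memory drifting by at most $\lVert W_t\rVert$ under the linear temporal projection between $t$ and $t+1$.

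The more serious gap is your treatment of the perturbation of the attention coefficients, which you propose to fold in as a higher-order correction or to neglect outright by treating the attention pattern as fixed between snapshots. In the paper's proof this term is not negligible; it is the first-order source of the $B\lVert W_t\rVert^2$ term in the statement. The paper decomposes $\alpha_k^{(t+1)}h_k^{(t+1)} - \alpha_k^{(t)}h_k^{(t)} = \alpha_k^{(t+1)}\big[h_k^{(t+1)} - h_k^{(t)}\big] + h_k^{(t)}\big[\alpha_k^{(t+1)} - \alpha_k^{(t)}\big]$, then bounds $\lVert \alpha_k^{(t+1)} - \alpha_k^{(t)}\rVert$ through the explicit scoring function $e_{ij}^{(t)} = \mathrm{LeakyReLU}\left(w^\top [s_i^{(t)}, s_j^{(t)}]\right)$, which is $1$-Lipschitz in the memory states; under a linear temporal projection this gives a coefficient change of order $\lVert w\rVert\,\lVert W_t\rVert^2\Delta_t^2$, and multiplying by the uniform bound $B$ on hidden representations yields exactly the $B\lVert W_t\rVert^2$ contribution. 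Discarding it deletes that term from the bound. Moreover, the stability assumption stated before Definition~\ref{def:info_flow} concerns the degree distribution between consecutive snapshots, not frozen attention coefficients, so it cannot be invoked to set $\alpha^{(t+1)} = \alpha^{(t)}$. Your case split is right in spirit --- only node $i$'s memory receives the event update $\Delta_{t,t+1}(s_i)$, while all other memories evolve by temporal projection alone --- but the paper implements it at the level of a single aggregation over the temporal neighborhood, splitting off the $k = i$ term when $i$ lies in the relevant neighborhood, rather than by a walk expansion across $L$ layers.
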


\begin{itemize}
    \item When $L < d(u,i)$, we have $u$ is $\gamma$-flowing with
    \begin{equation*}
        d_{\mathcal{Y}} (f_u(G_{t+1}), f_u(G_{t})) \leq deg(u) \big[ \lVert W_t \lVert  + B \lVert W_t \lVert^2 \big];
    \end{equation*}
    \item When $L \geq d(u,i)$, we have $u$ is $\gamma$-flowing with 
    \begin{equation*}
        d_{\mathcal{Y}} (f_u(G_{t+1}), f_u(G_{t})) \leq deg(u) \big[ \lVert W_t \lVert  + B \lVert W_t \lVert^2 \big] + \Delta_{t, t+1}(s_i),
    \end{equation*}
where $deg(u)$ denotes the degree of node $u$; and $B$ is an upper-bound of hidden representation space.
\end{itemize}

Based on Theorem~\ref{thm_attention}, the propagation of information to indirectly involved nodes appears to be influenced by three key factors. 
The first is the node's connectivity, typically reflected by its degree. This result is intuitive, as highly connected nodes are more likely to fall within the necessary temporal neighborhood and thus receive frequently occurring updates. 
The second factor is the inclusion of a memory component, which can enhance the flow of information by retaining past interactions that are relevant for future updates. 
The third factor is the temporal projection mechanism, which updates node representations over time to reflect changes in the graph's structure.

The expressivity upper bounds (cf.~Appendix~\ref{appendix:proof_thm_gcn} and~\ref{appendix:proof_thm_attention} for derivation details) allow us to categorize existing methods in the literature based on whether they incorporate one or more of these components. 
This categorization provides insights into how node representations evolve and offers guidance on the suitability of different methods for various scenarios, depending on the used aggregation framework (in terms of connectivity), memory, and temporal dynamics.

\section{Method Categorization and Review} \label{sec:review}
This section reviews GRL methods for CTDGs, categorizing them based on our expressivity framework and exploring how SSRL approaches align with this framework to leverage unlabeled data effectively.

\tikzstyle{my-box}=[
 rectangle,
 draw=hidden-draw,
 rounded corners,
 text opacity=1,
 minimum height=1.5em,
 minimum width=10em,
 inner sep=1.5pt,
 align=center,
 fill opacity=.5,
 ]
 \tikzstyle{leaf}=[my-box, minimum height=2em,
 fill=hidden-orange!60, text=black, align=left,font=\scriptsize,
 inner xsep=3pt,
 inner ysep=5pt,
 ]
\begin{figure*}[t]
	\centering
	\resizebox{\textwidth}{!}{
		\begin{forest}
			forked edges,
			for tree={
				grow=east,
				reversed=true,
				anchor=base west,
				parent anchor=east,
				child anchor=west,
                node options={align=center},
                align = center,
				base=left,
				font=\small,
				rectangle,
				draw=hidden-draw,
				rounded corners,
				minimum width=4em,
				edge+={darkgray, line width=1pt},
				s sep=3pt,
				inner xsep=2pt,
				inner ysep=3pt,
				ver/.style={rotate=90, child anchor=north, parent anchor=south, anchor=center},
			},
			where level=1{text width=6.0em,font=\scriptsize}{},
			where level=2{text width=5.6em,font=\scriptsize}{},
			where level=3{text width=6.8em,font=\scriptsize}{},
			[
			SSL on CTDG, ver
			[
			  Message-Passing \\Based Methods
			[
			  Memory-Based
			[
                DyRep~{\citep{trivedi2018dyrep},}
                TGN~{\citep{tgn_icml_grl2020}, SPEDGNN~\citep{zhou2022well}}
                , leaf, text width=33.4em, align = left
			]
			]
                [
			Non-Memory \\Based
			[
			TGAT~{\citep{Xu2020Inductive},}
			, leaf, text width=33.4em, align = left
			]
                ]
			]
			[
			Non-Message \\ Passing Methods
			[
			Random Walks
			[
                CTDNE~{\citep{nguyen2018continuous},} 
                NeurTWs~{\citep{jin2022neural}} 
			, leaf, text width=33.4em, align = left
			]
			]
			[
			  Update-Based
			[
                JODIE~{\citep{Kumar_2019},} 
                DeepCoevolve~{\citep{dai2016deep}}
			, leaf, text width=33.4em, align = left
			]
			]
			]
			[
			  Hybrid \\ Methods
			[
                GraphMixer~{\citep{cong2023do},}
                SimpleDyG~{\citep{wu2024feasibility},}
                PRES~{\citep{su2024pres},}
                , leaf, text width=40.67em, align = left
			]
			]
			]
		\end{forest}
  }
\caption{Categorization and taxonomy of GRL methods on CTDGs.}
\label{fig:ctdg_categorization}
\end{figure*}

\subsection{Method Categorization Inspired by Theoretical Framework}

The proposed expressivity framework in Section \ref{sec:info_flow} provides a systematic basis for evaluating the strengths and limitations of each method across different graph types. Typically, we can see that a model's expressivity depends on the message-passing framework and the usage of memory where relying only on one single-hop update with minimal memory might have a small upper-bound on its IF for distance node, limiting therefore its capacity to melodize the event occurrence. Based on these insights, we introduce a categorization (Figure~\ref{fig:ctdg_categorization}) to distinguish between the different choices. An additional visual summary is provided in Appendix~\ref{appndix:visual_summary}.

\subsubsection{Non-message-passing (Non-MP) methods}
One of the earliest approaches to address CTDG is JODIE~\citep{Kumar_2019}. 
The proposed approach captures interaction dynamics between users and items by learning two types of embeddings: static embeddings for long-term characteristics and dynamic embeddings that evolve over time based on interactions. 
JODIE employs two RNNs (for users and items respectively), whose outputs are interdependent and update upon each arrival of event. 
To manage inactivity periods, the model incorporates a projection step that adjusts embeddings using an attention-like mechanism.
While the items and users can be seen as nodes within a graph, the method doesn't rely on MP mechanism and only leverages temporal projection to update representations, allowing node embeddings to evolve over time, even in the absence of interactions. 
Based on Theorem~\ref{thm_gcn}, we consider that this temporal evolution is useful for predicting recurring events, such as seasonal product demand in recommendation systems. 
However, we argue that JODIE struggles with stochastic events, especially in multi-community graphs, where sudden, unpredictable changes are more prevalent. 
Similarly, DeepCoevolve~\citep{dai2016deep} also employs two mutually-recursive RNNs to generate embedding trajectories. The key difference is that, unlike JODIE, DeepCoevolve does not use a temporal projection and consequently doesn't update node representations based on temporal dynamics; instead, it maintains a node’s embedding consistently between events or interactions.

Another category of None-MP methods rely on Random Walks (RW) to generate node embeddings. 
\cite{nguyen2018continuous} proposed a pioneering approach in this area, introducing the concept of temporal RW. 
In static graphs, RW can be freely generated; but in CTDG, these walks must adhere to the temporal dimension, ensuring that steps occur at non-decreasing time. 
This approach proposes to select nodes and generate walks within a node's temporal neighborhood, incorporating probability and threshold-cut strategies to ensure the generated walks are representative. 
These walks are then processed using a variant of the Node2Vec algorithm~\citep{grover2016node2vec} to produce node representations for various downstream tasks. 
Further exploration in this field has refined the sampling strategy, such as considering the graph’s topology~\citep{jin2022neural}. While these methods are promising, they fall outside the specific scope of our theoretical analysis, which focuses on methods leveraging MP and temporal dynamics.

\subsubsection{Message-passing (MP) based methods}
Alongside the Non-MP category, methods propagating information across the graph structure were proposed. 
One such approach is DyRep~\citep{trivedi2018dyrep}, which models both the topological evolution of the graph and the temporal dynamics of node interactions. 
DyRep updates a node's representation by considering its direct neighbors, its previous representation, and a temporal projection technique based on a temporal point process, which treats each incoming interaction event as an observation of a stochastic process. 
While this method checks many important boxes from our theoretical analysis, there is a notable limitation in its reliance on direct neighborhoods. 
Nodes beyond one-hop distance (i.e., two or more hops from the event) are updated solely through the temporal projection and non-updated neighbors, without direct MP from further nodes. 
This can lead to over-smoothing when events are concentrated within the same neighborhood, as the aggregation pulls in redundant information and dilutes useful features over time. 
As a result, while DyRep performs well in scenarios where one-hop interactions are crucial (such as recommender systems and bi-partite graphs), it is less effective in extracting information from longer-range interactions in larger, sparser graphs. 
These insights and limitations are also applicable to the similar approach DyGNN~\cite{ma2020streaming}, which consists of two main components: an update mechanism that adjusts the representation of nodes involved in events using two LSTMs and a propagation unit that disseminates information within the 1-hop neighborhood of the event nodes.

\cite{Xu2020Inductive} extended DyRep and DyGNN by proposing TGAT. They incorporate Functional Time Encoding to map temporal data into a high-dimensional space. They also introduce Temporal Graph Attention Layer to aggregate information from a node’s temporal neighbors using a self-attention mechanism that accounts for both structural and temporal aspects of the neighborhood.
TGAT does not necessarily employ a memory component, as described in Section \ref{sec:components}, but instead uses the node features directly as the initial representation $h_u^{0}$. 
Therefore, as highlighted by Theorem~\ref{thm_attention}, selecting an appropriate number of layers $L$, and thus defining the $L$-hop neighborhood, is crucial for ensuring effective information propagation.

In a similar perspective to TGAT, TGN~\citep{tgn_icml_grl2020} offers a general formalization of CTDG functions, akin to the framework we presented in Section~\ref{sec:components}. 
TGN incorporates both temporal projection and information propagation through attention-based aggregation. 
Theorem~\ref{thm_attention} is directly applicable to this method, emphasizing the need to adjust the number of layers based on the specific use case. 
For example, in recommender systems, setting $L=2$ may suffice, while in social networks or molecular graphs, the appropriate value of $L$ should be determined by the graph's structural properties, such as its diameter or betweenness centrality, which is captured by the degree term in our provided upper-bound. 
This approach was further explored by~\cite{souza2022provablyexpressivetemporalgraph}, who examined the expressive power of the TGN architecture. 
Their work highlighted the importance of the memory component and the advantages of using injective MP functions. 
Building on these insights, they introduced PINT, which enhances the expressivity of the TGN model by incorporating injective aggregation and update functions, along with augmenting memory states with relative positional features. 

The temporal dimension has been further explored in recent work, such as FreeDyG~\citep{tian2024freedyg}, which approaches the temporal perspective through the frequency domain. 
FreeDyG's update mechanism utilizes two LSTMs (for source and target node respectively), along with a merge unit that incorporates information from the direct neighborhood and propagates it to adjacent nodes. 
Its key innovation lies in the use of the Fast Fourier Transform (FFT) to convert time-domain data into the frequency domain, allowing the model to identify key interaction patterns. 
The process is then reversed using the inverse FFT to project the data back into the time domain. 
This approach enables the model to capture temporal patterns in node interactions more effectively. 
However, the method remains limited by its reliance on the direct neighbors of the nodes involved in an event. 
Consequently, while FreeDyG can better track various temporal trends in some frequency-structured cases, it still falls short in fully considering the structural aspects of the graph, as discussed earlier. 
In the same line, SPEDGNN~\citep{zhou2022well} introduces adaptive spectral Transforms and global Framelet graph convolutions to enable efficient long-term dependency modeling and multi-scale graph feature extraction. 
Unlike TGAT, it avoids discrete time snapshots by encoding temporal and structural information directly in the spectral domain, ensuring scalable and well-conditioned representations.

A common limitation among many existing methods is their sensitivity to the number of layers, which, if not well-matched to the graph's topology, can hinder performance in capturing long-range dependencies. 
Our theoretical analysis highlights this, showing how the upper bound depends on the topology, either through normalized walks or node degrees. 
Increasing the number of layers, $L$, does not necessarily enhance long-range dependency modeling and can lead to issues like over-smoothing or over-squashing. 
This limitation inspired the development of CTAN~\citep{pmlr-v235-gravina24a}, which addresses information propagation following an event by learning a diffusion function, modeled as a dynamical system governed by a learnable ordinary differential equation. 
This approach is particularly beneficial for CTDGs, where learning temporal patterns from irregularly sampled timestamps is essential. 
Similarly, \citet{petrovic2024temporal} proposed Temporal Graph Rewiring, a method designed to facilitate message passing between distant nodes in temporal graphs. 
By employing expander graph propagation, a technique with proven success in static graphs, this approach enables more efficient long-range interactions in dynamic settings.
\vspace{-0.5em}
\subsubsection{Hybrid methods}
\vspace{-0.5em}
Another category of methods in the literature falls outside of our proposed categorization, adopting a hybrid approach that aims to propagate information within DG without strictly adhering to classical MP schemes.

One primary focus of hybrid methods is addressing the complexity challenges associated with existing CTDG methods. 
Complexity has two main aspects: computational (time or operation) complexity and storage (space or memory) complexity. 
From the computational perspective, \cite{cong2023do} analyzed the impact of incorporating RNNs and self-attention mechanisms into MP on model accuracy and performance. While these components can enhance performance, the study revealed that they are not strictly necessary to achieve strong results. 
So, the authors introduced GraphMixer, a simpler yet highly effective architecture composed of a ``link-encoder'' based on MLPs, a ``node-encoder'' employing neighborhood mean pooling, and an MLP-based ``link-classifier''. 
This design significantly reduces both computational complexity, as MLPs are considerably more time-efficient than attention-based models and RNNs. Similarly, SimpleDyG~\citep{wu2024feasibility} leverages the Transformer’s self-attention mechanism by modeling the dynamic graph as a sequence that captures temporal evolution patterns through an alignment technique. 
By transforming the graph into a sequence and applying an unmodified Transformer architecture, SimpleDyG demonstrate lower operational complexity, resulting in a sparser and more efficient design.

Another focus of hybrid methods addresses batching strategies, emphasizing the need to preserve time dependencies, particularly when handling larger batch sizes. 
One notable approach is PRES~\citep{su2024pres}, which introduces an iterative prediction-correction scheme to mitigate temporal discontinuities, complemented by a memory-smoothing objective to maintain memory coherence. 
This method achieves comparable performance to previous works while significantly improving speed, enabling the use of larger batch sizes.

\subsection{Relevance of Self-Supervised Representation Learning (SSRL) Methods for CTDG}

While the methods reviewed in the above categories are capable of extracting meaningful node and graph representations for a variety of tasks, they typically require large volumes of labeled data for effective training. 
However, the exponential growth of CTDG data across diverse fields in domains such as social networks, recommender systems and biological interaction networks, has rendered manual labeling both labor-intensive and prohibitively costly. 
As a result, SSRL has become an increasingly popular approach to address the scarcity of labeled data. 
SSRL tackles this challenge by creating alternative, auxiliary tasks from unlabeled data to help models learn useful representations. 
These auxiliary tasks, often referred to as {\it pretraining} or {\it pretext tasks}, allow models to leverage the inherent data structure as a learning signal. 
By defining these pretext tasks carefully, SSRL enables the model to capture essential features and patterns in the data without needing explicit labels. SSRL has achieved considerable success in fields like CV~\citep{YM.2020Self-labelling, he2022masked} and NLP~\citep{radford2018improving}, forming the backbone of foundational models that underpin today's AI systems~\citep{bommasani2021opportunities}. 

We would like to highlight that the theoretical analysis provided in Section~\ref{sec:info_flow}, grounded in IF, does not consider the presence of labels. 
Instead, it focuses on examining how an event influences a node's representation, thereby evaluating the method's expressive power in modeling the CTDG at time $t$. 
While this analysis has been instrumental in categorizing various methods as in Section~\ref{sec:review}, it is also highly relevant in the context of SSRL. 
A method’s expressive power is essential for designing effective pretext tasks; for instance, if a method has limitations in propagating information over long ranges, contrastive augmentations that depend on capturing long-range dependencies may fail to deliver robust pretraining outcomes.

In the context of CTDGs, SSRL has significant potential for tasks such as link prediction, anomaly detection, and temporal pattern recognition, where labels are often unavailable or costly to obtain. 
Additionally, another learning paradigm, semi-supervised learning, has also been explored within the DG context. 
This approach is particularly useful for node classification tasks where only a subset of nodes is labeled, and the objective is to propagate these labels to the remaining unlabeled nodes. 
Semi-supervised learning has been widely applied in static graph settings, leveraging known labels to guide the representation learning process for all nodes within the graph. 
In this work, we focus on SSRL approaches in the context of CTDG. The next section will therefore focus on reviewing various SSRL-based methods within the CTDG context, exploring different approaches for designing pretext tasks that harness the temporal and structural aspects of evolving graphs to learn robust representations.
\vspace{-0.5em}
\subsubsection{Pretext tasks for pretraining on CTDG}
\vspace{-0.5em}
The commonly adopted pretext tasks for pretraining SSRL model on CTDG can be largely divided into two main categories: {\bf predictive} and {\bf contrastive} pretraining tasks. 

{\bf Predictive pretraining} often relies on auto-regressive modeling paradigms, leveraging the sequential nature of temporal data. 
This task is particularly valuable in domains where data naturally follows a temporal or ordered structure, such as NLP or time-series analysis. 
In NLP, for example, predicting the next word based on the context of preceding words has been shown to help models learn robust and generalizable representations of language~\citep{kenton2019bert}. 
These learned representations capture semantic and syntactic patterns that enhance performance on a range of downstream tasks. 
For CTDGs, where the temporal aspect of interactions between nodes is a key feature, predictive pretext tasks are equally relevant. 
By taking advantage of the temporal ordering of events, an intuitive auxiliary pretext task emerges: predicting the occurrence of the next event. 
This task involves, at any given time step $t$, forecasting the subsequent event at the following time step $t+1$, underpinning the majority of SSRL approaches for CTDG.

In the context of link prediction within CTDGs, this task often translates to predicting which two nodes will interact in the next event, such as anticipating a user’s next purchase or interaction in recommender systems~\citep{kumar2019predicting}. 
Such predictions provide critical insights into temporal dynamics and user preferences, enhancing the model’s ability to generalize to unseen events. 
Specifically, it uses observed events to optimize the negative log-likelihood of event prediction~\citep{Xu2020Inductive, tgn_icml_grl2020, trivedi2018dyrep}. 
Observed events are treated as positive instances, while non-occurring events are designated as negative instances, which facilitates a form of binary classification. 
However, since evaluating all possible non-occurring events is computationally infeasible in large-scale graphs, negative sampling is commonly employed. 
Negative sampling draws a manageable set of negative samples from the potential edges based on a predefined sampling distribution, thereby reducing complexity while still providing informative gradients for learning.

{\bf Contrastive pretraining} seeks to learn an embedding space by positioning semantically similar pairs, such as augmented versions of the same input or closely related elements closer together, while pushing apart pairs that are deemed dissimilar, referred to as negative pairs. This process helps the model to learn discriminative features that are useful for differentiating between various inputs in downstream tasks. 
By optimizing this contrastive objective, the model gains an enhanced ability to capture meaningful feature distinctions and similarities within the data.
This task has achieved notable success in domains like CV, where it has significantly advanced performance by leveraging image augmentations to create so-called positive pairs.

However, contrastive pretraining has been less explored in CTDGs. 
Only a few studies~\citep{chen2023self, sun2022self} have extended this technique to DG. 
In particular, DySubC~\citep{chen2023self} introduced a contrastive learning framework designed to maximize the mutual information between each node’s embedding and the representation of its associated temporal subgraph. 
DySubC consists of generating time-aware subgraphs for each node by capturing both structural information from the node's neighborhood and temporal information from edge timestamps, thereby encoding the dynamics of interactions over time. 
It then constructs positive and negative samples: positive pairs consist of the central node and its time-weighted subgraph representation, while negative samples are created by applying structural and temporal perturbations, such as shuffling the subgraph structure or removing temporal weight edges. In the same line, CLDG~\citep{xu2023cldg} approaches contrastive learning by leveraging temporal translation invariance through a sampling layer that extracts persistent signals across snapshots. The aim is to maintain a node's local and global representations while mitigating semantic shifts introduced by temporal augmentations. Additionally, IDOL makes use of a specialized Personalized PageRank mechanism to capture topological changes, and integrates a topology-monitorable sampling strategy yielding better representations. 

\subsubsection{Downstream tasks}

Pretraining the model with SSRL pretext tasks aims to learn a robust graph function that provides useful representations for a range of predictive tasks, commonly referred to as {\it downstream tasks} in the literature. In the context of CTDGs, the literature has focused on three primary downstream tasks:

\textbf{Link prediction} task is the most studied one that involves predicting future links for a CTDG. That is to predict the next interaction, identifying both the source and target nodes. 
Link prediction is particularly relevant in recommender systems, where the goal is to anticipate the next item a user may be interested in. 
By accurately predicting these links, we can enhance recommendation quality and increase the likelihood of user engagement or sales. 
In the same perspective related to edge, a less explored task in the literature is the one related to edge classification, regression and clustering, the main goal of which is to assign each edge a class/value/cluster at a future time.

\textbf{Node-level prediction} mainly refers to node classification/regression in CTDGs, which essentially extends the static node-level prediction tasks by incorporating both the temporal dynamics and topologies. 
For example, in social networks, node classification may be used to detect spammers or malicious actors by analyzing patterns in their evolving connections and activities. 
As networks evolve, so do the relationships between nodes, meaning that a node's classification needs to be updated dynamically based on new interactions. 

\textbf{Graph-level prediction} tasks aim to classify or regress entire graphs rather than individual nodes, making them especially relevant in domains like bioinformatics. For instance, in molecular graph analysis, each graph represents a molecular structure, and the task involves predicting graph properties such as toxicity or stability based on the configuration of the structure. 
When graphs evolve over time, the objective shifts to assessing whether the graph remains valid under specific criteria. For example, the addition of certain bonds in a molecular graph may result in a chemically viable or unstable structure.

Our theoretical analysis introduced in Section~\ref{sec:info_flow} primarily focuses on the impact of events on a node's representation, making it particularly well-suited for link prediction tasks. However, it remains applicable to other tasks as well, as these tasks are also fundamentally based on node representations, typically aggregated using pooling or \verb|ReadOut| functions.

\vspace{-0.5em}
\section{Empirical Validation} \label{sec:empirical_analysis}
\vspace{-0.5em}
This section presents an empirical study to validate the theoretical insights and categorization proposed earlier. 
Specifically, we evaluate the effectiveness of reviewed methods and identify the types of graphs where they perform best. 
We first outline the experimental setup, followed by a discussion of the empirical findings. Additionally, an analysis of the tightness of the proposed upper bounds is provided in Appendix~\ref{appendix:tightness_upperbound}.
\vspace{-0.5em}
\subsection{Experimental Setup}
\vspace{-0.5em}
Our experimental focus is on the popular link prediction task, chosen as a representative challenge within the broader scope of CTDGs. 
We examine a diverse selection of models from those reviewed in Section~\ref{sec:review}, focusing on models that capture different facets of CTDG methodologies: 
(I) JODIE~\citep{Kumar_2019}, which models evolving embeddings over time; 
(II) DyRep~\citep{trivedi2018dyrep}, designed for capturing dynamic node interactions; 
(III) TGN~\citep{tgn_icml_grl2020}, i.e., Temporal Graph Networks for inductive learning; 
(IV) TGAT~\citep{Xu2020Inductive}, known for its attention-based approach for handling temporal information; 
(V) CTAN \citep{pmlr-v235-gravina24a}, focused on modeling long-range dependencies within graphs. Experimental details and hyper-parameters used to train/test the model are provided in Appendix \ref{appendix:experimental_details}.

In alignment with the selected methods, we choose a set of both synthetic and real-world DG datasets which are representative to the main application areas and unique characteristics of CTDGs. Additional details on each dataset, including design considerations and properties, are provided in Appendix~\ref{appendix:synthetic_data}.

\textbf{Long-range graphs} are those requiring long-range reasoning to achieve strong performance, as distant nodes or graph regions can influence each other. 
To capture these interactions, specific information propagation mechanisms are necessary, as conventional MP often fails to bridge these long-range dependencies effectively. 
Evaluating performance on these graphs is crucial to assess a method's ability to transfer information across distant parts of the graph. 
For this setting, we use an approach similar to the one proposed by~\citet{pmlr-v235-gravina24a} where we consider a temporal version of the PascalVOC-SP graph~\citep{dwivedi2022long}. 
Additional information about the generation is provided in Appendix \ref{appendix:synthetic_data}.

\textbf{Bi-partite graphs} play a crucial role in applications like recommender systems, a prominent use case for CTDGs. To evaluate how different methods handle interactions between disjoint node sets, which are critical for recommendation quality, we use the TGBL-Wiki dataset from the TGB benchmarks~\citep{huang2024temporal}. This dataset represents a bi-partite network where nodes are editors and Wiki pages, with edges added when an editor modifies a page at a specific timestamp. Edges carry text features derived from the page edits. Dataset statistics can be found in Table \ref{tab:data_statistics} in Appendix \ref{appendix:synthetic_data}.

\textbf{Community-based graphs} consist of nodes grouped into distinct clusters, with relatively few inter-community links compared to intra-community connections. 
A Barbell graph is a representative example of this structure, where inter-community links act as bottlenecks~\citep{alonbottleneck}, making these graphs challenging for temporal MP schemes due to limited information propagation between communities. 
To evaluate how well a benchmark captures intra-community links, we generate a synthetic dataset using the Stochastic Block Model (SBM) with the NetworkX~\citep{hagberg2008exploring} package. 
This involves creating $B$ dense communities with connections randomly sampled based on a normalized density over a time horizon. 
As time progresses, the graph evolves, becoming increasingly connected across clusters. 
For evaluation, we leverage the connection sampling densities between clusters to rank nodes across active communities, measuring performance using Normalized Discounted Cumulative Gain (NDCG)~\citep{jarvelin2002cumulated}. More details on the parameters, generation and evaluation process are provided in Table \ref{tab:synth-settings} in Appendix \ref{appendix:synthetic_data}.

\subsection{Results Analysis} \label{sec:experiments}

Table~\ref{table:long-range} presents the average AUC scores along with their corresponding standard deviations. As anticipated and discussed in Section \ref{sec:review}, since the graph structure and topology are important for the downstream classification task, MP based models significantly outperform non-MP approaches like JODIE. This specific use-case showcases the limitation of this latter family of models, which rather focus on updating the evolved nodes in each event. We additionally see that CTAN is outperforming all the other MP based methods, confirming therefore its ability to track long-range dependencies within the graph. 

Table~\ref{table:tgbl-wiki} summarizes the average test and validation MRR (Mean Reciprocal Rank) scores, along with standard deviations, for various methods on the real-world bi-partite dataset TGBL-Wiki. 
The results show that JODIE performs similarly to the MP based methods DyRep and TGAT. 
While JODIE might theoretically be expected to excel on bi-partite graphs, the dataset’s low ``surprise factor'' — a measure of how predictable the interactions between nodes are — limits its advantages. 
In this setting, capturing long-range temporal and structural dependencies becomes more critical for the downstream task. 
Consequently, methods like CTAN and TGN, which are better equipped to model such dependencies, achieve superior performance.

\begin{wraptable}{r}{0.5\textwidth}
\vspace{-1.5em}
\caption{Average test AUC ($\pm$ denotes standard deviation) of the different methods on long-range PascalVOC 10 and 30 datasets. Best performance per dataset in \textbf{bold}.}
\label{table:long-range}
\centering
\begin{tabular}{lcc}
\hline
\multicolumn{1}{c}{\multirow{2}{*}{\textbf{Model}}} & \multicolumn{2}{c}{\textbf{Long-Range Graph}} \\ \cline{2-3} 
\multicolumn{1}{c}{}                                & \textbf{PascalVOC 10}   & \textbf{PascalVOC 30}  \\ \hline
JODIE                                               & 0.67 $\pm$ 0.03         &  0.65 $\pm$ 0.07                       \\
DyRep                                               & 0.69 $\pm$ 0.02         &  0.70 $\pm$ 0.02                      \\
TGN                                                 & 0.78 $\pm$ 0.10         &  0.71 $\pm$ 0.04                      \\
TGAT                                                & 0.77 $\pm$ 0.02         & 0.76 $\pm$ 0.03                       \\
CTAN                                                &  \textbf{0.80 $\pm$ 0.01}                       &      \textbf{0.78 $\pm$ 0.01}                   \\ \hline
\end{tabular}
\end{wraptable}

Similarly, Table~\ref{table:synth-data-vertical} presents the average test and validation NDCG scores, along with their standard deviations, for the evaluated benchmarks on the SBM community-based synthetic datasets. 
The results indicate that CTAN achieves superior performance compared to the other methods, as expected, given its ability to effectively model long-range dependencies, which is critical to facilitate information to flow within the cluster and to capture community structures. 
Additionally, TGN demonstrates strong performance, surpassing most methods. 
These findings align with expectations, as the community-based nature of the dataset benefits from the self-attention mechanisms utilized by TGN.

\begin{table}[b]
\centering
\begin{minipage}{0.48\textwidth}
\centering
\caption{Average Test and Validation MRR ($\pm$ denotes standard deviation) of the different approaches on the real-world Bi-Partite dataset TGBL-Wiki. Best performance in \textbf{bold}.}
\label{table:tgbl-wiki}
\begin{tabular}{lcc}
\hline
\multicolumn{1}{c}{\multirow{2}{*}{\textbf{Model}}} & \multicolumn{2}{c}{\textbf{Bi-Partite Graph}} \\ \cline{2-3} 
\multicolumn{1}{c}{}                                & \textbf{Test}   & \textbf{Val}          \\ \hline
JODIE                                               & 0.261 $\pm$ 0.123            & 0.325 $\pm$ 0.149 \\
DyRep                                               & 0.263 $\pm$ 0.005            & 0.317 $\pm$ 0.008 \\
TGN                                                 & 0.577 $\pm$ 0.015            & 0.617 $\pm$ 0.009 \\
TGAT                                                & 0.282 $\pm$ 0.022            & 0.323 $\pm$ 0.008 \\
CTAN                                                & \textbf{0.611 $\pm$ 0.011}   & \textbf{0.647 $\pm$ 0.012} \\ \hline
\end{tabular}
\end{minipage}%
\hfill
\begin{minipage}{0.48\textwidth}
\centering
\caption{Average Test and Validation NDCG scores ($\pm$ denotes standard deviation), on the considered benchmarks on the SBM Community-Based synthetic dataset. Best performance in \textbf{bold}.}

\label{table:synth-data-vertical}
\begin{tabular}{lcc}
\hline
\multirow{2}{*}{\textbf{Model}} & \multicolumn{2}{c}{\textbf{Community-Based Graph}}         \\ \cline{2-3} 
                                & \textbf{Test}                & \textbf{Val}                 \\ \hline
JODIE                           & 0.8491 $\pm$ 0.003          & 0.863 $\pm$ 0.002          \\
DyRep                           & 0.811 $\pm$ 0.005          & 0.819 $\pm$ 0.006          \\
TGN                             & 0.8559 $\pm$ 0.0264          & 0.8572 $\pm$ 0.0273          \\
TGAT                            & 0.8459 $\pm$ 0.0001           & 0.8564 $\pm$ 0.0002           \\
CTAN                            & \textbf{0.868 $\pm$ 0.002} & \textbf{0.872 $\pm$ 0.003} \\ \hline
\end{tabular}
\end{minipage}
\end{table}

\section{Conclusion}\label{sec:conclusion}
In this paper, we presented a comprehensive review of Graph Representation Learning (GRL) on Continuous-Time Dynamic Graphs (CTDGs), emphasizing the critical role of expressivity in capturing temporal and structural dynamics. We introduced an Information-Flow (IF) centric theoretical framework to quantify the expressivity of CTDG models, providing a structured analysis of their performance across graph types and application scenarios. Leveraging this framework, we categorized existing methods, highlighting their strengths and limitations in addressing key challenges such as long-range dependency modeling, temporal irregularity, and sparse data. Additionally, we examined the growing relevance of Self-Supervised Representation Learning (SSRL) for CTDGs, offering insights into predictive and contrastive pretraining tasks and their applicability to real-world datasets. Empirical validation supported our theoretical insights, demonstrating the nuanced trade-offs between different methods across synthetic and real-world datasets. By bridging theoretical foundations with practical evaluations, this work provides a robust roadmap for advancing GRL in dynamic settings, enabling more expressive and adaptable models for a broad spectrum of applications.

\section*{Acknowledgments}
We would like to thank the entire ACE (AI Center of Excellence) at King (a part of Microsoft) for their continuous support, insightful discussions, and collaborative spirit throughout this project.

We express our sincere gratitude to King's OSC (Open Source Council) for their invaluable support throughout the open source release process. Special thanks go to Colin Cashin, Nate Cleveland, Hugo Cura, Paul Jordaan, David Lorenzo, and Raul Pareja for their dedicated efforts in facilitating the publication and code release process. We are also deeply appreciative of King’s CTO, Eric Bowman, for his careful review and approval of our code (\url{https://github.com/king/ctdg-info-flow}).

Additionally, we are grateful for the diligent review of our manuscript from legal and communications perspectives. Special thanks to Alexandra Stark and Tim Elgar for their essential contributions in ensuring alignment with legal and communications standards.

This work was partially funded by the Wallenberg AI, Autonomous Systems and Software Program (WASP). We also acknowledge the collaboration between King/Microsoft and KTH Royal Institute of Technology, which made this research possible.

\bibliography{tmlr}

\begin{thebibliography}{51}
\providecommand{\natexlab}[1]{#1}
\providecommand{\url}[1]{\texttt{#1}}
\expandafter\ifx\csname urlstyle\endcsname\relax
  \providecommand{\doi}[1]{doi: #1}\else
  \providecommand{\doi}{doi: \begingroup \urlstyle{rm}\Url}\fi

\bibitem[Alon \& Yahav(2021)Alon and Yahav]{alonbottleneck}
Uri Alon and Eran Yahav.
\newblock On the bottleneck of graph neural networks and its practical implications.
\newblock In \emph{International Conference on Learning Representations}, 2021.
\newblock URL \url{https://openreview.net/forum?id=i80OPhOCVH2}.

\bibitem[Bai et~al.(2021)Bai, Zhu, Song, Zhao, Hou, Du, and Li]{bai2021a3t}
Jiandong Bai, Jiawei Zhu, Yujiao Song, Ling Zhao, Zhixiang Hou, Ronghua Du, and Haifeng Li.
\newblock A3t-gcn: Attention temporal graph convolutional network for traffic forecasting.
\newblock \emph{ISPRS International Journal of Geo-Information}, 10\penalty0 (7):\penalty0 485, 2021.

\bibitem[Bommasani et~al.(2021)Bommasani, Hudson, Adeli, Altman, Arora, von Arx, et~al.]{bommasani2021opportunities}
Rishi Bommasani, Drew~A. Hudson, Ehsan Adeli, Russ Altman, Simran Arora, Sydney von Arx, et~al.
\newblock On the opportunities and risks of foundation models, 2021.

\bibitem[Chen et~al.(2023)Chen, Liu, Jiang, and Chen]{chen2023self}
Ke-Jia Chen, Linsong Liu, Linpu Jiang, and Jingqiang Chen.
\newblock Self-supervised dynamic graph representation learning via temporal subgraph contrast.
\newblock \emph{ACM Transactions on Knowledge Discovery from Data}, 18\penalty0 (1):\penalty0 1--20, 2023.

\bibitem[Cong et~al.(2023)Cong, Zhang, Kang, Yuan, Wu, Zhou, Tong, and Mahdavi]{cong2023do}
Weilin Cong, Si~Zhang, Jian Kang, Baichuan Yuan, Hao Wu, Xin Zhou, Hanghang Tong, and Mehrdad Mahdavi.
\newblock Do we really need complicated model architectures for temporal networks?
\newblock In \emph{The Eleventh International Conference on Learning Representations}, 2023.
\newblock URL \url{https://openreview.net/forum?id=ayPPc0SyLv1}.

\bibitem[Dai et~al.(2016)Dai, Wang, Trivedi, and Song]{dai2016deep}
Hanjun Dai, Yichen Wang, Rakshit Trivedi, and Le~Song.
\newblock Deep coevolutionary network: Embedding user and item features for recommendation.
\newblock In \emph{Proceedings of the 23rd ACM SIGKDD International Conference on Knowledge Discovery and Data Mining}, 2016.

\bibitem[Du et~al.(2016)Du, Dai, Trivedi, Upadhyay, Gomez-Rodriguez, and Song]{du2016recurrent}
Nan Du, Hanjun Dai, Rakshit Trivedi, Utkarsh Upadhyay, Manuel Gomez-Rodriguez, and Le~Song.
\newblock Recurrent marked temporal point processes: Embedding event history to vector.
\newblock In \emph{Proceedings of the 22nd ACM SIGKDD international conference on knowledge discovery and data mining}, pp.\  1555--1564, 2016.

\bibitem[Dwivedi et~al.(2022)Dwivedi, Ramp{\'a}{\v{s}}ek, Galkin, Parviz, Wolf, Luu, and Beaini]{dwivedi2022long}
Vijay~Prakash Dwivedi, Ladislav Ramp{\'a}{\v{s}}ek, Michael Galkin, Ali Parviz, Guy Wolf, Anh~Tuan Luu, and Dominique Beaini.
\newblock Long range graph benchmark.
\newblock \emph{Advances in Neural Information Processing Systems}, 35:\penalty0 22326--22340, 2022.

\bibitem[Gilmer et~al.(2017)Gilmer, Schoenholz, Riley, Vinyals, and Dahl]{gilmer2017}
Justin Gilmer, Samuel~S Schoenholz, Patrick~F Riley, Oriol Vinyals, and George~E Dahl.
\newblock Neural message passing for quantum chemistry.
\newblock In \emph{International Conference on Machine Learning}, pp.\  1263--1272. PMLR, 2017.

\bibitem[Gravina \& Bacciu(2024)Gravina and Bacciu]{gravina2024deep}
Alessio Gravina and Davide Bacciu.
\newblock Deep learning for dynamic graphs: models and benchmarks.
\newblock \emph{IEEE Transactions on Neural Networks and Learning Systems}, 2024.

\bibitem[Gravina et~al.(2024)Gravina, Lovisotto, Gallicchio, Bacciu, and Grohnfeldt]{pmlr-v235-gravina24a}
Alessio Gravina, Giulio Lovisotto, Claudio Gallicchio, Davide Bacciu, and Claas Grohnfeldt.
\newblock Long range propagation on continuous-time dynamic graphs.
\newblock In Ruslan Salakhutdinov, Zico Kolter, Katherine Heller, Adrian Weller, Nuria Oliver, Jonathan Scarlett, and Felix Berkenkamp (eds.), \emph{Proceedings of the 41st International Conference on Machine Learning}, volume 235 of \emph{Proceedings of Machine Learning Research}, pp.\  16206--16225. PMLR, 21--27 Jul 2024.
\newblock URL \url{https://proceedings.mlr.press/v235/gravina24a.html}.

\bibitem[Grover \& Leskovec(2016)Grover and Leskovec]{grover2016node2vec}
Aditya Grover and Jure Leskovec.
\newblock {Node2Vec}: Scalable feature learning for networks.
\newblock In \emph{Proceedings of the 22nd ACM SIGKDD international conference on Knowledge discovery and data mining}, pp.\  855--864, 2016.

\bibitem[Gutteridge et~al.(2023)Gutteridge, Dong, Bronstein, and Di~Giovanni]{gutteridge2023drew}
Benjamin Gutteridge, Xiaowen Dong, Michael~M Bronstein, and Francesco Di~Giovanni.
\newblock Drew: Dynamically rewired message passing with delay.
\newblock In \emph{International Conference on Machine Learning}, pp.\  12252--12267. PMLR, 2023.

\bibitem[Hagberg et~al.(2008)Hagberg, Swart, and Schult]{hagberg2008exploring}
Aric Hagberg, Pieter~J Swart, and Daniel~A Schult.
\newblock Exploring network structure, dynamics, and function using networkx.
\newblock Technical report, Los Alamos National Laboratory (LANL), Los Alamos, NM (United States), 2008.

\bibitem[He et~al.(2022)He, Chen, Xie, Li, Doll{\'a}r, and Girshick]{he2022masked}
Kaiming He, Xinlei Chen, Saining Xie, Yanghao Li, Piotr Doll{\'a}r, and Ross Girshick.
\newblock Masked autoencoders are scalable vision learners.
\newblock In \emph{Proceedings of the IEEE/CVF conference on computer vision and pattern recognition}, pp.\  16000--16009, 2022.

\bibitem[Huang et~al.(2024)Huang, Poursafaei, Danovitch, Fey, Hu, Rossi, Leskovec, Bronstein, Rabusseau, and Rabbany]{huang2024temporal}
Shenyang Huang, Farimah Poursafaei, Jacob Danovitch, Matthias Fey, Weihua Hu, Emanuele Rossi, Jure Leskovec, Michael Bronstein, Guillaume Rabusseau, and Reihaneh Rabbany.
\newblock Temporal graph benchmark for machine learning on temporal graphs.
\newblock \emph{Advances in Neural Information Processing Systems}, 36, 2024.

\bibitem[J{\"a}rvelin \& Kek{\"a}l{\"a}inen(2002)J{\"a}rvelin and Kek{\"a}l{\"a}inen]{jarvelin2002cumulated}
Kalervo J{\"a}rvelin and Jaana Kek{\"a}l{\"a}inen.
\newblock Cumulated gain-based evaluation of ir techniques.
\newblock \emph{ACM Transactions on Information Systems (TOIS)}, 20\penalty0 (4):\penalty0 422--446, 2002.

\bibitem[Jin et~al.(2022)Jin, Li, and Pan]{jin2022neural}
Ming Jin, Yuan-Fang Li, and Shirui Pan.
\newblock Neural temporal walks: Motif-aware representation learning on continuous-time dynamic graphs.
\newblock \emph{Advances in Neural Information Processing Systems}, 35:\penalty0 19874--19886, 2022.

\bibitem[Kazemi et~al.(2020)Kazemi, Goel, Jain, Kobyzev, Sethi, Forsyth, and Poupart]{kazemi2020representation}
Seyed~Mehran Kazemi, Rishab Goel, Kshitij Jain, Ivan Kobyzev, Akshay Sethi, Peter Forsyth, and Pascal Poupart.
\newblock Representation learning for dynamic graphs: A survey.
\newblock \emph{Journal of Machine Learning Research}, 21\penalty0 (70):\penalty0 1--73, 2020.

\bibitem[Kearnes et~al.(2016)Kearnes, McCloskey, Berndl, Pande, and Riley]{kearnes2016molecular}
Steven Kearnes, Kevin McCloskey, Marc Berndl, Vijay Pande, and Patrick Riley.
\newblock Molecular graph convolutions: moving beyond fingerprints.
\newblock \emph{Journal of Computer-Aided Molecular Design}, 30\penalty0 (8):\penalty0 595--608, 2016.

\bibitem[Kenton \& Toutanova(2019)Kenton and Toutanova]{kenton2019bert}
Jacob Devlin Ming-Wei~Chang Kenton and Lee~Kristina Toutanova.
\newblock Bert: Pre-training of deep bidirectional transformers for language understanding.
\newblock In \emph{Proceedings of naacL-HLT}, volume~1, pp.\ ~2. Minneapolis, Minnesota, 2019.

\bibitem[Kingma \& Ba(2015)Kingma and Ba]{kingma_adam}
Diederik~P. Kingma and Jimmy Ba.
\newblock Adam: A method for stochastic optimization.
\newblock In \emph{International Conference on Learning Representations}, 2015.

\bibitem[Kipf \& Welling(2017)Kipf and Welling]{Kipf:2017tc}
Thomas~N. Kipf and Max Welling.
\newblock {Semi-Supervised Classification with Graph Convolutional Networks}.
\newblock In \emph{International Conference on Learning Representations (ICLR)}, 2017.

\bibitem[Kumar et~al.(2019{\natexlab{a}})Kumar, Zhang, and Leskovec]{Kumar_2019}
Srijan Kumar, Xikun Zhang, and Jure Leskovec.
\newblock Predicting dynamic embedding trajectory in temporal interaction networks.
\newblock In \emph{Proceedings of the 25th ACM SIGKDD International Conference on Knowledge Discovery and Data Mining}, KDD ’19. ACM, July 2019{\natexlab{a}}.
\newblock \doi{10.1145/3292500.3330895}.
\newblock URL \url{http://dx.doi.org/10.1145/3292500.3330895}.

\bibitem[Kumar et~al.(2019{\natexlab{b}})Kumar, Zhang, and Leskovec]{kumar2019predicting}
Srijan Kumar, Xikun Zhang, and Jure Leskovec.
\newblock Predicting dynamic embedding trajectory in temporal interaction networks.
\newblock In \emph{Proceedings of the 25th ACM SIGKDD international conference on knowledge discovery \& data mining}, pp.\  1269--1278, 2019{\natexlab{b}}.

\bibitem[L{\"u} et~al.(2012)L{\"u}, Medo, Yeung, Zhang, Zhang, and Zhou]{lu2012recommender}
Linyuan L{\"u}, Mat{\'u}{\v{s}} Medo, Chi~Ho Yeung, Yi-Cheng Zhang, Zi-Ke Zhang, and Tao Zhou.
\newblock Recommender systems.
\newblock \emph{Physics reports}, 519\penalty0 (1):\penalty0 1--49, 2012.

\bibitem[Ma et~al.(2020)Ma, Guo, Ren, Tang, and Yin]{ma2020streaming}
Yao Ma, Ziyi Guo, Zhaocun Ren, Jiliang Tang, and Dawei Yin.
\newblock Streaming graph neural networks.
\newblock In \emph{Proceedings of the 43rd international ACM SIGIR conference on research and development in information retrieval}, pp.\  719--728, 2020.

\bibitem[Morris et~al.(2017)Morris, Kersting, and Mutzel]{morris2017glocalized}
Christopher Morris, Kristian Kersting, and Petra Mutzel.
\newblock Glocalized weisfeiler-lehman graph kernels: Global-local feature maps of graphs.
\newblock In \emph{2017 IEEE International Conference on Data Mining (ICDM)}, pp.\  327--336. IEEE, 2017.

\bibitem[Morris et~al.(2019)Morris, Ritzert, Fey, Hamilton, Lenssen, Rattan, and Grohe]{morris2019weisfeiler}
Christopher Morris, Martin Ritzert, Matthias Fey, William~L Hamilton, Jan~Eric Lenssen, Gaurav Rattan, and Martin Grohe.
\newblock Weisfeiler and leman go neural: Higher-order graph neural networks.
\newblock In \emph{Proceedings of the AAAI conference on artificial intelligence}, volume~33, pp.\  4602--4609, 2019.

\bibitem[Nguyen et~al.(2018)Nguyen, Lee, Rossi, Ahmed, Koh, and Kim]{nguyen2018continuous}
Giang~Hoang Nguyen, John~Boaz Lee, Ryan~A Rossi, Nesreen~K Ahmed, Eunyee Koh, and Sungchul Kim.
\newblock Continuous-time dynamic network embeddings.
\newblock In \emph{Companion proceedings of the the web conference 2018}, pp.\  969--976, 2018.

\bibitem[Petrovi{\'c} et~al.(2024)Petrovi{\'c}, Huang, Poursafaei, and Veli{\v{c}}kovi{\'c}]{petrovic2024temporal}
Katarina Petrovi{\'c}, Shenyang Huang, Farimah Poursafaei, and Petar Veli{\v{c}}kovi{\'c}.
\newblock Temporal graph rewiring with expander graphs.
\newblock In \emph{ICML 2024 Workshop on Geometry-grounded Representation Learning and Generative Modeling}, 2024.
\newblock URL \url{https://openreview.net/forum?id=LN9fe4CWIH}.

\bibitem[Radford et~al.(2018)Radford, Narasimhan, Salimans, and Sutskever]{radford2018improving}
Alec Radford, Karthik Narasimhan, Tim Salimans, and Ilya Sutskever.
\newblock Improving language understanding with unsupervised learning.
\newblock 2018.

\bibitem[Rossi et~al.(2020{\natexlab{a}})Rossi, Chamberlain, Frasca, Eynard, Monti, and Bronstein]{rossi2020temporal}
Emanuele Rossi, Ben Chamberlain, Fabrizio Frasca, Davide Eynard, Federico Monti, and Michael Bronstein.
\newblock Temporal graph networks for deep learning on dynamic graphs.
\newblock \emph{arXiv preprint arXiv:2006.10637}, 2020{\natexlab{a}}.

\bibitem[Rossi et~al.(2020{\natexlab{b}})Rossi, Chamberlain, Frasca, Eynard, Monti, and Bronstein]{tgn_icml_grl2020}
Emanuele Rossi, Ben Chamberlain, Fabrizio Frasca, Davide Eynard, Federico Monti, and Michael Bronstein.
\newblock Temporal graph networks for deep learning on dynamic graphs.
\newblock In \emph{ICML 2020 Workshop on Graph Representation Learning}, 2020{\natexlab{b}}.

\bibitem[Seo et~al.(2018)Seo, Defferrard, Vandergheynst, and Bresson]{seo2018structured}
Youngjoo Seo, Micha{\"e}l Defferrard, Pierre Vandergheynst, and Xavier Bresson.
\newblock Structured sequence modeling with graph convolutional recurrent networks.
\newblock In \emph{Neural Information Processing: 25th International Conference, ICONIP 2018, Siem Reap, Cambodia, December 13-16, 2018, Proceedings, Part I 25}, pp.\  362--373. Springer, 2018.

\bibitem[Souza et~al.(2022)Souza, Mesquita, Kaski, and Garg]{souza2022provablyexpressivetemporalgraph}
Amauri Souza, Diego Mesquita, Samuel Kaski, and Vikas Garg.
\newblock Provably expressive temporal graph networks.
\newblock \emph{Advances in neural information processing systems}, 35:\penalty0 32257--32269, 2022.

\bibitem[Su et~al.(2024)Su, Zou, and Wu]{su2024pres}
Junwei Su, Difan Zou, and Chuan Wu.
\newblock Pres: Toward scalable memory-based dynamic graph neural networks.
\newblock \emph{arXiv preprint arXiv:2402.04284}, 2024.

\bibitem[Sun et~al.(2019)Sun, Liu, Wu, Pei, Lin, Ou, and Jiang]{sun2019bert4rec}
Fei Sun, Jun Liu, Jian Wu, Changhua Pei, Xiao Lin, Wenwu Ou, and Peng Jiang.
\newblock Bert4rec: Sequential recommendation with bidirectional encoder representations from transformer.
\newblock In \emph{Proceedings of the 28th ACM international conference on information and knowledge management}, pp.\  1441--1450, 2019.

\bibitem[Sun et~al.(2022)Sun, Ye, Peng, and Yu]{sun2022self}
Li~Sun, Junda Ye, Hao Peng, and Philip~S Yu.
\newblock A self-supervised riemannian gnn with time varying curvature for temporal graph learning.
\newblock In \emph{Proceedings of the 31st ACM international conference on information \& knowledge management}, pp.\  1827--1836, 2022.

\bibitem[Tian et~al.(2024)Tian, Qi, and Guo]{tian2024freedyg}
Yuxing Tian, Yiyan Qi, and Fan Guo.
\newblock Freedyg: Frequency enhanced continuous-time dynamic graph model for link prediction.
\newblock In \emph{The Twelfth International Conference on Learning Representations}, 2024.
\newblock URL \url{https://openreview.net/forum?id=82Mc5ilInM}.

\bibitem[Trivedi et~al.(2019)Trivedi, Farajtabar, Biswal, and Zha]{trivedi2018dyrep}
Rakshit Trivedi, Mehrdad Farajtabar, Prasenjeet Biswal, and Hongyuan Zha.
\newblock Dyrep: Learning representations over dynamic graphs.
\newblock In \emph{International Conference on Learning Representations}, 2019.
\newblock URL \url{https://openreview.net/forum?id=HyePrhR5KX}.

\bibitem[Veli{\v c}kovi{\'c} et~al.(2018)Veli{\v c}kovi{\'c}, Cucurull, Casanova, Romero, Li{\`o}, and Bengio]{Velickovic:2018we}
Petar Veli{\v c}kovi{\'c}, Guillem Cucurull, Arantxa Casanova, Adriana Romero, Pietro Li{\`o}, and Yoshua Bengio.
\newblock {Graph Attention Networks}.
\newblock In \emph{ICLR}, 2018.

\bibitem[Virmaux \& Scaman(2018)Virmaux and Scaman]{virmaux18}
Aladin Virmaux and Kevin Scaman.
\newblock Lipschitz regularity of deep neural networks: analysis and efficient estimation.
\newblock In S.~Bengio, H.~Wallach, H.~Larochelle, K.~Grauman, N.~Cesa-Bianchi, and R.~Garnett (eds.), \emph{Advances in Neural Information Processing Systems}, volume~31. Curran Associates, Inc., 2018.
\newblock URL \url{https://proceedings.neurips.cc/paper_files/paper/2018/file/d54e99a6c03704e95e6965532dec148b-Paper.pdf}.

\bibitem[Watts \& Strogatz(1998)Watts and Strogatz]{watts1998collective}
Duncan~J Watts and Steven~H Strogatz.
\newblock Collective dynamics of ‘small-world’networks.
\newblock \emph{nature}, 393\penalty0 (6684):\penalty0 440--442, 1998.

\bibitem[Wu et~al.(2019)Wu, Tang, Zhu, Wang, Xie, and Tan]{wu2019session}
Shu Wu, Yuyuan Tang, Yanqiao Zhu, Liang Wang, Xing Xie, and Tieniu Tan.
\newblock {Session-based Recommendation with Graph Neural Networks}.
\newblock In \emph{Proceedings of the 33rd AAAI Conference on Artificial Intelligence}, pp.\  346--353, 2019.

\bibitem[Wu et~al.(2024)Wu, Fang, and Liao]{wu2024feasibility}
Yuxia Wu, Yuan Fang, and Lizi Liao.
\newblock On the feasibility of simple transformer for dynamic graph modeling.
\newblock In \emph{Proceedings of the ACM on Web Conference 2024}, pp.\  870--880, 2024.

\bibitem[Xu et~al.(2020)Xu, Ruan, Korpeoglu, Kumar, and Achan]{Xu2020Inductive}
Da~Xu, Chuanwei Ruan, Evren Korpeoglu, Sushant Kumar, and Kannan Achan.
\newblock Inductive representation learning on temporal graphs.
\newblock In \emph{International Conference on Learning Representations}, 2020.
\newblock URL \url{https://openreview.net/forum?id=rJeW1yHYwH}.

\bibitem[Xu et~al.(2019)Xu, Hu, Leskovec, and Jegelka]{xu2019powerful}
Keyulu Xu, Weihua Hu, Jure Leskovec, and Stefanie Jegelka.
\newblock {How Powerful are Graph Neural Networks?}
\newblock In \emph{7th International Conference on Learning Representations}, 2019.

\bibitem[Xu et~al.(2023)Xu, Shi, Ma, Dong, Zhou, and Zheng]{xu2023cldg}
Yiming Xu, Bin Shi, Teng Ma, Bo~Dong, Haoyi Zhou, and Qinghua Zheng.
\newblock Cldg: Contrastive learning on dynamic graphs.
\newblock In \emph{2023 IEEE 39th International Conference on Data Engineering (ICDE)}, pp.\  696--707. IEEE, 2023.

\bibitem[YM. et~al.(2020)YM., C., and A.]{YM.2020Self-labelling}
Asano YM., Rupprecht C., and Vedaldi A.
\newblock Self-labelling via simultaneous clustering and representation learning.
\newblock In \emph{International Conference on Learning Representations}, 2020.
\newblock URL \url{https://openreview.net/forum?id=Hyx-jyBFPr}.

\bibitem[Zhou et~al.(2022)Zhou, Liu, Liu, Huang, Lio, and Wang]{zhou2022well}
Bingxin Zhou, Xinliang Liu, Yuehua Liu, Yunying Huang, Pietro Lio, and Yu~Guang Wang.
\newblock Well-conditioned spectral transforms for dynamic graph representation.
\newblock In \emph{Learning on Graphs Conference}, pp.\  12--1. PMLR, 2022.

\end{thebibliography}
\bibliographystyle{tmlr}

\newpage
\appendix

\vbox{%
\hsize\textwidth
\linewidth\hsize
\vskip 0.1in
{{\LARGE\bf The Appendix of} \\ \\ {\large\bf Expressivity of Representation Learning on Continuous-Time Dynamic Graphs: An Information-Flow Centric Review}}
\vspace{2\baselineskip}
}

\section{Preliminaries}

\textbf{Notations} For our theoretical analysis, we consider the following notations:

\begin{itemize}
    \item $d(u, v)$: is the shortest path distance between node $u$ and node $v$. 
    \item $\text{deg(u)}$: denotes node $u$'s degree, i.e $\mid \mathcal{N}(u)\mid$.
    \item $d_{\mathcal{Y}}$ is the distance within our output manifold $\mathcal{Y}$.
    \item The node involved in the event is denoted as $i$.
    \item $L$ denotes the number of layers used in the considered model.
\end{itemize}

\textbf{Problem Set-up.}  As our goal is to understand the effect of each component on the model's evolving dynamics, we consider the most general use-case of a CTDG function. Typically, we consider that our model $f$ follows the proposed components in Section \ref{sec:components}. Specifically, we consider the model to based on the described temporal message-passing (either through a GCN-like aggregation or attention-based). We also consider that it makes use of the memory state and not only the node's individual representation. We additionally consider that the model encompasses a Linear temporal projection function to take into account temporal inactivity. 

\textbf{Theoretical Assumptions.} For our theoretical analysis, we assume that all models use $1$-Lipschitz continuous activation functions, which covers the majority of commonly used functions, including ReLU, LeakyReLU, and TanH~\citep{virmaux18}. We also focus on the effect of a single event, in contrast to the more practical setting where a batch of events is considered. Under this single-event assumption, the node distribution can be treated as static between two relevant time shot of the graph at $t$ and $t+1$. While this assumption may not strictly hold when processing batches, we consider that deriving the specific bounds will yield the same theoretical insights on the considered method's ability to propagate information.

\section{Proof of Theorem~\ref{thm_gcn}}
\label{appendix:proof_thm_gcn}

\begin{theorem*}[GCN-based aggregation]
Let's consider a CTDG-based function $f: (\mathcal{A}, \mathcal{X}) \rightarrow \mathcal{Y}$ based on $L$ GCN-like layers. After an event between node $i$ and another node, we have the following:
\begin{itemize}
    \item For any node $u$ not involved in the event and for which $L < d(u,i)$ we have $u$ is $\gamma-\text{flowing}$ with: 
    \begin{equation*}
        d_{\mathcal{Y}} (f_u(G_{t+1}), f_u(G_{t})) \leq \hat{w}_u \lVert W_t \lVert \prod_{l=1}^L \lVert W^{(l)}\lVert.
    \end{equation*}
    \item For any node $u$ not involved in the event and for which $L \geq d(u,i)$ we have $u$ is $\gamma-\text{flowing}$ with: 
    \begin{equation*}
        d_{\mathcal{Y}} (f_u(G_{t+1}), f_u(G_{t})) \leq \prod_{l=1}^L \lVert W^{(l)}\lVert  \big[ \hat{w}_u \lVert W_t \lVert + \hat{w}_{u, i} \underset{t, t+1}{\Delta}(s_i)],
    \end{equation*}

with $\hat{w}_u$ being the sum of temporal normalized walks of length $(L-1)$ starting from node $u$ and $\hat{w}_{u, i}$ is the normalized shortest path between $u$ and $i$.     
\end{itemize}
\end{theorem*}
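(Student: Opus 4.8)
The plan is to express both $f_u(G_t)$ and $f_u(G_{t+1})$ through the layer-wise GCN recursion and to bound their difference by tracking the two independent sources of change between the snapshots: the temporal projection, which perturbs the (memory-based) input of \emph{every} node as time advances from $t$ to $t+1$, and the event itself, which perturbs only the memory state $s_i$ of the directly involved node $i$. Invoking the stated stability assumption on the degree distribution, the normalized aggregation coefficients $\tfrac{1}{\sqrt{d_a d_b}}$ may be treated as unchanged across the two snapshots, so the single added edge contributes negligibly to the normalization and the whole difference reduces to propagating these two input perturbations through $L$ identical message-passing layers.

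First I would write each GCN layer as the composition of the normalized neighbor aggregation $\tilde h_v^{(\ell)}$, the linear map $W^{(\ell)}$, and a $1$-Lipschitz activation, so that the layer map is $\lVert W^{(\ell)}\rVert$-Lipschitz. Telescoping over the $L$ layers then shows that the change in $h_u^{(L)}$ caused by a perturbation $\delta_w$ at the input of node $w$ is controlled by $\prod_{l=1}^L \lVert W^{(l)}\rVert$ times the accumulated normalization weight along all walks from $u$ to $w$; summing these normalized walk weights over the reachable nodes is precisely what $\hat w_u$ is meant to encode, with the temporal factor $\lVert W_t\rVert$ quantifying the per-node size of each $\delta_w$ introduced by the projection.

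Next I would split into the two cases according to the reachability of the event through $L$ hops of message passing. When $L < d(u,i)$, the memory perturbation $\Delta_{t,t+1}(s_i)$ cannot reach $u$, since every walk of length $\le L$ from $u$ avoids $i$; hence the event term vanishes and only the temporal-projection perturbation survives, yielding $\hat w_u \lVert W_t\rVert \prod_{l=1}^L \lVert W^{(l)}\rVert$. When $L \ge d(u,i)$, the perturbation at $i$ now reaches $u$, and its leading propagation weight is the product of normalization factors along a shortest $u$--$i$ path, which is exactly $\hat w_{u,i}$; this contributes the extra term $\hat w_{u,i}\,\Delta_{t,t+1}(s_i)$ inside the bracket, while the temporal term carries over unchanged.

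I expect the main obstacle to be the bookkeeping of the walk weights: establishing that the accumulated temporal perturbations aggregate into $\hat w_u$ with the correct walk length, and that the event contribution is dominated by the shortest-path weight $\hat w_{u,i}$ rather than by the full walk sum, all while keeping the normalization coefficients consistent under the degree-stability assumption. Handling the combination of a node's own representation with its neighbor aggregation inside $\textsc{Update}^{(\ell)}$ — the implicit self-loop — also needs care, so that the single temporal factor $\lVert W_t\rVert$ is not double-counted across layers and the separation between the temporal and event contributions remains clean.
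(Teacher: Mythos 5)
Your proposal follows essentially the same route as the paper's proof: unroll the $L$ GCN layers using the $1$-Lipschitz activations to pull out $\prod_{l=1}^{L}\lVert W^{(l)}\rVert$, accumulate the degree-normalization coefficients along walks into $\hat{w}_u$, bound the layer-zero perturbation of uninvolved nodes by the temporal-projection norm $\lVert W_t\rVert$ and that of node $i$ by $\Delta_{t,t+1}(s_i)$, and split cases by whether a length-$L$ walk from $u$ can reach $i$. The bookkeeping obstacle you flag--whether the event contribution collapses to the shortest-path weight $\hat{w}_{u,i}$ rather than the full sum over walks through $i$--is real but is glossed over in the paper's own derivation in exactly the same way, so your plan is faithful to it.
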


\begin{proof}
We consider the case in which the propagation is done using a GCN-like aggregation. Specifically, we recall that a GCN update at layer $\ell$ for a node within a neighborhood can be written as:
\begin{equation}
    h_u^{(\ell)} = \sigma^{(\ell)} \left(\sum_{v \in \mathcal{N}(u) \bigcup \{ u \}} \frac{W^{(\ell)} h_v^{(\ell-1)}}{ \sqrt{(1 + \text{deg(u)})(1 + \text{deg(v)})} }\right)
\end{equation}
where $W^{(\ell)} \in \mathbb{R}^{e_{\ell-1} \times e_{\ell}}$ is the learnable weight matrix with $e_{\ell}$ being the embedding dimension of layer $\ell$ and $\sigma^{(\ell)}$ is the activation function of $\ell$-th layer.

\textbf{Proof's Intuition.} In our proof, we want to see the effect of an event on the node representation of a node $u$ within our graph. Typically, the set of nodes can be divided into two main categories. One first set which is within a reachable distance of where the event happened in respect to the number of message-passing layers. A second set, which is not reached based on the chosen number of message-passing. We approached the proof therefore by considered each set independently, where we upper-bound the expected change in each set. This latter division gives an intuition on how the considered model $f$ interacts with the different sets and consequently reflects its ability to propagate information within the graph after an event occurrence. 

\subsection{For nodes not involved in the event and for which $L < d(u,i)$}

In this part, we consider the nodes that are not directly connected in the events and that are distant from the nodes involved in the events. Specifically, we consider that the number of propagation $L < d(u,i)$ with $u$ being the node and $i$ being the node included in the event and $d$. In this proof, we consider that we are dealing with a GCN-like propagation, we can therefore right the following:
\begin{align*}
    d_{\mathcal{Y}} (f(G^{t+1}), f(G^{t})) &= \lVert h_{u, t+1}^{(l)} - h_{u, t}^{(l)} \lVert \\
    & = \lVert \sigma^{(l)}\!\left(\sum_{v \in \mathcal{N}(u) \bigcup \{ u \}} \frac{W^{(\ell)} h_{v, t+1}^{(\ell-1)}}{ \sqrt{(1 + \text{deg(u)})(1 + \text{deg(v)})} }\right) \\ 
    & \hspace{15.5em} - \sigma^{(l)}\!\left(\sum_{v \in \mathcal{N}(u) \bigcup \{ u \}} \frac{W^{(\ell)} h_{v, t}^{(\ell-1)}}{ \sqrt{(1 + \text{deg(u)})(1 + \text{deg(v)})} }\right) \lVert \\
    &\leq \left\lVert \sum_{v \in \mathcal{N}(u) \bigcup \{ u \}} \frac{W^{(\ell)} h_{v, t+1}^{(\ell-1)}}{ \sqrt{(1 + \text{deg(u)})(1 + \text{deg(v)})} } - \sum_{v \in \mathcal{N}(u) \bigcup \{ u \}} \frac{W^{(\ell)} h_{v, t}^{(\ell-1)}}{ \sqrt{(1 + \text{deg(u)})(1 + \text{deg(v)})} } \right\lVert \\
    & = \lVert W^{(\ell)} \lVert \sum_{v \in \mathcal{N}(u) \bigcup \{ u \}} \frac{ \lVert h_{v, t+1}^{(\ell-1)} - h_{v, t}^{(\ell-1)} \lVert}{ \sqrt{(1 + \text{deg(u)})(1 + \text{deg(v)})} }\\
\end{align*}

We see that the right term on the resulting inequality is related to the one we are considering. Therefore, by iteratively doing the same process, we get the following:
\begin{align*}
    d_{\mathcal{Y}} (f(G^{t+1}), f(G^{t})) &= \lVert h_{u, t+1}^{(l)} - h_{u, t}^{(l)} \lVert \\
    & \leq \prod_{l=1}^l \lVert W^{(l)}\lVert \sum_{z \in \mathcal{N}(y) \bigcup \{ y \}} \frac{\lVert h_{v, t+1}^{(0)} - h_{v, t}^{(0)} \lVert}{\sqrt{(1 + \text{deg(u)})} (1 + \text{deg(w)}) (1 + \text{deg(j)}) \ldots  (1 + \text{deg(y)}) \sqrt{(1 + \text{deg(z)})}} \\
    & \leq \prod_{l=1}^L \lVert W^{(l)}\lVert \sum_{z \in \mathcal{N}(y) \bigcup \{ y \}} \frac{\lVert W_t \lVert}{\sqrt{(1 + \text{deg(u)})} (1 + \text{deg(w)}) (1 + \text{deg(j)}) \ldots  (1 + \text{deg(y)}) \sqrt{(1 + \text{deg(z)})}} \\
    & \leq \hat{w}_u \lVert W_t \lVert \prod_{l=1}^L \lVert W^{(l)}\lVert \\    
\end{align*}
with $\hat{w}_u$ being the sum of temporal normalized walks of length $(L-1)$ starting from node $u$. 

\subsection{For nodes not involved in the event and for which $L \geq d(u,i)$}

Similar to the previous part, we have the result:

\begin{align*}
    d_{\mathcal{Y}} (f(G^{t+1}), f(G^{t})) &= \lVert h_{u, t+1}^{(l)} - h_{u, t}^{(l)} \lVert \\
    & \leq \prod_{l=1}^L \lVert W^{(l)}\lVert \sum_{z \in \mathcal{N}(y) \bigcup \{ y \}} \frac{\lVert h_{v, t+1}^{(0)} - h_{v, t}^{(0)} \lVert}{\sqrt{(1 + \text{deg(u)})} (1 + \text{deg(w)}) (1 + \text{deg(j)}) \ldots  (1 + \text{deg(y)}) \sqrt{(1 + \text{deg(z)})}} \\ 
\end{align*}

In the specific case that $L \geq d(u,i)$, we know that a part of the neighborhood contains one of the nodes that are subject to the event $i$. Therefore we have the following:
\begin{align*}
    d_{\mathcal{Y}} (f(G_{t+1}), f(G_{t})) & \leq \prod_{l=1}^L \lVert W^{(l)}\lVert \sum_{z \in \mathcal{N}(y) \bigcup \{ y \}} \frac{\lVert h_{v, t+1}^{(0)} - h_{v, t}^{(0)} \lVert}{\sqrt{(1 + \text{deg(u)})} (1 + \text{deg(w)}) (1 + \text{deg(j)}) \ldots  (1 + \text{deg(y)}) \sqrt{(1 + \text{deg(z)})}}\\ 
    & \leq \prod_{l=1}^L \lVert W^{(l)}\lVert  \left[ \sum_{z \in \mathcal{N}(y) \bigcup \{ y \} \backslash \{i\}} \frac{\lVert h_{v, t+1}^{(0)} - h_{v, t}^{(0)} \lVert}{\sqrt{(1 + \text{deg(u)})} (1 + \text{deg(w)}) (1 + \text{deg(j)}) \ldots  (1 + \text{deg(y)}) \sqrt{(1 + \text{deg(z)})}}  \right.\\
    &\left. + \frac{\lVert h_{i, t+1}^{(0)} - h_{i, t}^{(0)} \lVert}{\sqrt{(1 + \text{deg(u)})} (1 + \text{deg(w)}) (1 + \text{deg(j)}) \ldots  (1 + \text{deg(y)}) \sqrt{(1 + \text{deg(i)})}} \right]\\
\end{align*}

The first term is like the previous approach only dependent on temporal projection aspect, while the second is rather dependent on the difference in terms of memory of the updated node $i$ involved in the event.
We therefore find the following:

\begin{align*}
    d_{\mathcal{Y}} (f(G_{t+1}), f(G_{t})) & \leq \prod_{l=1}^L \lVert W^{(l)}\lVert \underset{z \in \mathcal{N}(y) \bigcup \{ y \}}{\Sigma} \frac{\lVert h_{v, t+1}^{(0)} - h_{v, t}^{(0)} \lVert}{\sqrt{(1 + \text{deg(u)})} (1 + \text{deg(w)}) (1 + \text{deg(j)}) \ldots  (1 + \text{deg(y)}) \sqrt{(1 + \text{deg(z)})}}\\ 
    & \leq \prod_{l=1}^L \lVert W^{(l)}\lVert  \big[ \hat{w}_u \lVert W_t \lVert + \hat{w}_{u, i} \underset{t, t+1}{\Delta}(s_i)]\\
\end{align*}

with $\hat{w}_u$ being the sum of temporal normalized walks of length $(L-1)$ starting from node $u$ and $\hat{w}_{u, i}$ is the normalized shortest path between $u$ and $i$. 

\end{proof}

\section{Proof of Theorem~\ref{thm_attention}}
\label{appendix:proof_thm_attention}

\begin{theorem*}[Attention-based aggregation]
Let's consider a CTDG function $f: (\mathcal{A}, \mathcal{X}) \rightarrow \mathcal{Y}$ based on $L$ attention-based layers. After an event between node $i$ and another node, we have the following:
\end{theorem*}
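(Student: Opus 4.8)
The plan is to mirror the structure of the proof of Theorem~\ref{thm_gcn}, replacing the fixed degree-normalized GCN weights by data-dependent attention coefficients. First I would write the attention update at layer $\ell$ in the form $h_u^{(\ell)} = \sigma^{(\ell)}\bigl(\sum_{v \in \mathcal{N}(u)} \alpha_{uv} W^{(\ell)} h_v^{(\ell-1)}\bigr)$, where the coefficients $\alpha_{uv}$ are produced by a softmax over attention logits computed from the representations $h_u$ and $h_v$. As in the GCN case, I would split the analysis into the two regimes $L < d(u,i)$ and $L \geq d(u,i)$, and in each regime bound $d_{\mathcal{Y}}(f_u(G_{t+1}), f_u(G_t)) = \lVert h_{u,t+1}^{(L)} - h_{u,t}^{(L)}\rVert$ by peeling off one attention layer at a time and recursing, using the $1$-Lipschitz activation to drop $\sigma^{(\ell)}$ at each step, exactly as in the derivation of Theorem~\ref{thm_gcn} relative to Definition~\ref{def:info_flow}.

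The essential departure from the GCN argument is that the aggregation is now nonlinear in the representations through $\alpha_{uv}$, so a single triangle inequality no longer suffices. I would control the one-layer difference by inserting a cross term,
\begin{align*}
\Bigl\lVert \sum_{v} \alpha_{uv}^{t+1} W h_{v,t+1} - \sum_{v} \alpha_{uv}^{t} W h_{v,t} \Bigr\rVert
&\leq \sum_{v} \alpha_{uv}^{t+1} \lVert W\rVert \, \lVert h_{v,t+1} - h_{v,t}\rVert \\
&\quad + \sum_{v} \lvert \alpha_{uv}^{t+1} - \alpha_{uv}^{t}\rvert \, \lVert W h_{v,t}\rVert.
\end{align*}
The first, value-perturbation, term recurses like the GCN bound and contributes the $\lVert W_t\rVert$ scale seeded by the temporal projection; after summing over the $deg(u)$ neighbors of $u$, the degree factor appears in place of the symmetric normalization that produced the normalized-walk weight $\hat{w}_u$ in Theorem~\ref{thm_gcn}. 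The second, coefficient-perturbation, term is where the quadratic dependence $B\lVert W_t\rVert^2$ enters: I would bound $\lvert \alpha_{uv}^{t+1} - \alpha_{uv}^{t}\rvert$ using the Lipschitz continuity of softmax together with the fact that the attention logits carry their own factor of $\lVert W\rVert$, and bound $\lVert W h_{v,t}\rVert \leq B\lVert W\rVert$ via the assumed upper bound $B$ on the hidden representation space, again summing over $deg(u)$ neighbors.

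For the regime $L < d(u,i)$, node $i$'s memory update never reaches $u$ within $L$ hops, so the only source of variation between $G_t$ and $G_{t+1}$ is the temporal projection weight $W_t$, yielding the bound $deg(u)\bigl[\lVert W_t\rVert + B\lVert W_t\rVert^2\bigr]$. For $L \geq d(u,i)$, the same argument applies except that one neighbor lying on the shortest path now carries the memory perturbation of node $i$; isolating that summand as in the GCN proof produces the additive term $\Delta_{t,t+1}(s_i)$. The main obstacle I anticipate is precisely the coefficient-perturbation term: unlike the fixed GCN weights, the softmax attention coefficients are globally coupled across the neighborhood of $u$, so controlling $\lvert \alpha_{uv}^{t+1} - \alpha_{uv}^{t}\rvert$ uniformly requires both the Lipschitz bound on softmax and the boundedness $B$ of the representation space, and it is this coupling that forces the quadratic $\lVert W_t\rVert^2$ factor into the final bound.
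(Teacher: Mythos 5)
Your proposal follows essentially the same route as the paper's proof: the identical add-and-subtract cross-term decomposition of the one-layer difference into a value-perturbation term $\alpha_k^{(t+1)}[h_k^{(t+1)}-h_k^{(t)}]$ and a coefficient-perturbation term $h_k^{(t)}[\alpha_k^{(t+1)}-\alpha_k^{(t)}]$, the same two-regime case split with the isolated $\Delta(s_i)$ summand when $L \geq d(u,i)$, and the same use of the representation bound $B$ and the degree factor $deg(u)$. The only (cosmetic) divergence is where you locate the quadratic factor: the paper obtains $\lVert W_t\rVert^2$ from the $1$-Lipschitz LeakyReLU logits depending on the memories of \emph{both} endpoints, each evolving under the linear temporal projection (giving $\lVert w\rVert\,\delta(s_u)\,\delta(s_i) \leq \lVert w\rVert\,\lVert W_t\rVert^2\Delta_t^2$), rather than from a softmax Lipschitz constant combined with a weight factor inside the logits — a difference of bookkeeping, not of argument.
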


\begin{itemize}
    \item For any node $u$ not involved in the event and for which $L < d(u,i)$ we have $u$ is $\gamma-\text{flowing}$ with: 
    \begin{equation*}
        d_{\mathcal{Y}} (f_u(G_{t+1}), f_u(G_{t})) \leq deg(u) \big[ \lVert W_t \lVert  + B \lVert W_t \lVert^2 \big].
    \end{equation*}
    \item For any node $u$ not involved in the event and for which $L \geq d(u,i)$ we have $u$ is $\gamma-\text{flowing}$ with: 
    \begin{equation*}
        d_{\mathcal{Y}} (f_u(G_{t+1}), f_u(G_{t})) \leq deg(u) \big[ \lVert W_t \lVert  + B \lVert W_t \lVert^2 \big] + \Delta(s_i),
    \end{equation*}

with $deg(u)$ being the degree of node $u$ and $B$ an upper-bound of hidden representation space.     
\end{itemize}

\begin{proof}

\textbf{Proof's Intuition.} Similar to the previous proof, we consider in this case that the model is rather based on an attention framework to aggregate the information within the neighborhood. We start therefore by analyzing the effect of this addition. We afterwards, and in the same way as the GCN-case, derived the upper-bound for each set of nodes based on the distance to the event. 

In this part we rather focus on attention-based aggregation within our temporal neighborhood, which can formulated as the following:

\begin{equation}
    h_{u}^{(t+1)} = \sigma^{(\ell)} \left(\underset{k \in \mathcal{N}(u) \bigcup \{ u \}}{\Sigma} \alpha_{k}^{(t)} h_{k}^{(t)}\right)
\end{equation}

Let's consider a node $u$ at time $t$, we have the following:
\begin{align*}
    \lVert h_u^{(t+1)} - h_u^{(t)} \lVert  & = \lVert \sum_{k \in \mathcal{N}_u(t+1)} \alpha_k^{(t+1)} h_k^{(t+1)} - \sum_{k \in \mathcal{N}_u(t)} \alpha_k^{(t)} h_k^{(t)} \lVert \\
    & \leq \sum_{k \in \mathcal{N}_u(t+1)} \lVert \alpha_k^{(t+1)} h_k^{(t+1)} + \alpha_k^{(t+1)} h_k^{(t)} - \alpha_k^{(t+1)} h_k^{(t)} -  \alpha_k^{(t)} h_k^{(t)}\lVert \\
    & \leq \sum_{k \in \mathcal{N}_u(t+1)} \lVert \alpha_k^{(t+1)} [h_k^{(t+1)} - h_k^{(t)}] + h_k^{(t)} [\alpha_k^{(t+1)} - \alpha_k^{(t)}]  \lVert
\end{align*}

From the previous inequality, we see two main effects. A first effect which is dependent on the attention mechanism and a second effect which is dependent on the neighbor's representations.

We recall that the attention mechanism is formalized as follows:
\begin{equation} \label{eq:attention_value}
\alpha_{ij}^{(t)} = \frac{\exp\left( e_{ij}^{(t)} \right)}{\sum_{k \in N(i)} \exp\left( e_{ik}^{(t)} \right)},
\end{equation}

with:
\begin{equation}
e_{ij}^{(t)} = \text{LeakyReLU}\left( w^\top [ s_i^{(t)}, s_j^{(t)} ] \right),
\end{equation}
where \( s_i^{(t)} \) is the “memory’ of node \( i \) at time \( t \), \( w \) is a learnable weight vector, and \( [ s_i^{(t)}, s_j^{(t)} ] \) denotes the concatenation of \( s_i^{(t)} \) and \( s_j^{(t)} \). In this perspective, and since Leaky-ReLU is 1-Lipschitz continuous~\cite{virmaux18}, we have the following:

\begin{align} \label{eq:upper_bound_attention}
    \lVert \alpha_k^{(t+1)} - \alpha_k^{(t)} \lVert & \leq  \lVert w \lVert ([s_i^{(t+1)}, s_i^{(t)}] - [s_u^{(t+1)}, s_u^{(t)}])\\
    & \leq \lVert w \lVert \delta(s_u) \delta(s_i)
\end{align}

\subsection{For any node $u$ not involved in the event and for which $L < d(u,i)$}

Let's consider a node $u$ that is not involved in the event and for which $L < d(u,i)$. The elements within the neighborhood of this specific node are not subject to memory update. Additionally, for the node $u$, the difference in memory is only dependent on the time projection. Hence, if we assume that this is linear, we have the following:
\begin{align*}
    \lVert \alpha_k^{(t+1)} - \alpha_k^{(t)} \lVert & \leq \lVert w \lVert \lVert W_t \lVert^2 {\Delta_t}^2
\end{align*}

Without loss of generality, we additionally assume that the representation space is bounded; specifically for each node $u \in \mathcal{N}$ at any timestamp $t$, there exists a finite constant $B_u^{(t)} \geq 0$ such that:

\begin{equation*}
    \lVert h_u^{(t)} \rVert \leq B_u^{(t)}, \quad \text{where } B = \max_{u \in \mathcal{N},\ t} B_u^{(t)} \implies \lVert h_u^{(t)} \rVert \leq B \text{ for all } u \text{ and } t
\end{equation*}

From these two elements, we find the following:
\begin{equation*}
    \lVert h_u^{(t+1)} - h_u^{(t)} \lVert \leq \sum_{k \in \mathcal{N}(u)} \Delta(s_k) + B \lVert W_t \lVert^2 {\Delta_t}^2
\end{equation*}

\subsection{For any node $u$ not involved in the event and for which $L \geq d(u,i)$}

Let's now consider a node $u$ that is not involved in the event and for which $L \geq d(u,i)$. For the considered node $u$, one of the nodes $i$ included in the event is within the neighborhood, then its memory will be updated as well. From the previously derived inequality, we can directly write:
\begin{align*}
    \lVert h_u^{(t+1)} - h_u^{(t)} & \lVert \leq \sum_{k \in \mathcal{N}(u)}  \Delta(s_k) + B \lVert W_t \lVert^2 \\
    & \leq \left[\sum_{k \in \mathcal{N}(u) \backslash \{i\}}  \Delta(s_k) + B \lVert W_t \lVert^2 \right] + \Delta(s_i) + B \lVert W_t \lVert^2\\
    & \leq \sum_{k \in \mathcal{N}(u) \backslash \{i\}} \lVert W_t \lVert  + B \lVert W_t \lVert^2 + \Delta(s_i) + B \lVert W_t \lVert^2 \\
    & \leq deg(u) \big[ \lVert W_t \lVert  + B \lVert W_t \lVert^2 \big] + \Delta(s_i)
\end{align*}
\end{proof}

\section{On the Tightness of the Provided Upper-Bound} \label{appendix:tightness_upperbound}

In Theorem \ref{thm_attention}, we derive an upper bound on the expected norm difference in a node's representation between two consecutive time steps. To evaluate the tightness of this theoretical bound, we analyze the TGN model~\citep{tgn_icml_grl2020} since it is the closest model to align out with our proposed theoretical framework that is adapted on the general framework of a CTDG function presented in Section \ref{sec:components}.

We note that in our theoretical proof, we don’t consider event features (which are related to the added edges). Consequently, in order to align with our desire to study the tightness of the bounds, we make an adaptation of our derivations to take into account a TGN. 

We adapt the Equation~\eqref{eq:attention_value} to take into account edge features as it is the case in the TGN model, we have the following: 

\begin{equation}
\alpha_{ij}^{(t)} = \frac{\exp\left( e_{ij}^{(t)} \right)}{\sum_{k \in N(i)} \exp\left( e_{ik}^{(t)} \right)},
\end{equation}

with:
\begin{equation}
e_{ij}^{(t)} = \text{LeakyReLU}\left( w^\top [ s_i^{(t)}, s_j^{(t)} ] + w_2^\top [{e'}_{i,j}]    \right),
\end{equation}

with $w_2$ being the learned weight matrix used with the edge features and ${e'}_{i,j} = [e_{i,j} \mid \mid \Delta_t]$ is the concatenated vector of the edge features and the timestamp  as explained in \cite{tgn_icml_grl2020}. By following this formulation and using the triangular inequality, we can adapt the upper-bound provided in \ref{eq:upper_bound_attention} as follows: 
\begin{align*}
    \lVert \alpha_k^{(t+1)} - \alpha_k^{(t)} \lVert & \leq \lVert w \lVert \delta(s_u) \delta(s_i) + \lVert w_2 \lVert \lVert [{e'}_{i,j}] \lVert  \\
    &  \leq \lVert w \lVert \delta(s_u) \delta(s_i) + \lVert w_2 \lVert  [\lVert {e}_{i,j}\lVert^2 + \Delta_t \lVert^2 ]
\end{align*}

It is important to note that our theoretical analysis assumes the effect of a single event on the graph, meaning we consider one event per batch, therefore treating $t$ and $t+1$ as consecutive time steps and only one event coming in per time step. However, in real-world datasets, time is tracked in an absolute timeline, and events may not occur at evenly spaced intervals. To align with our theoretical framework, we normalize the time intervals in these datasets, effectively mapping the absolute timestamps to a relative scale that satisfies our assumptions.

Based on our previous adaptation, in Theorem \ref{thm_attention}, we derive an upper bound on the expected norm difference in a node's representation between two consecutive time steps. Specifically, we compute the difference in node representations using both our theoretical estimates and empirical values obtained from a trained model. Following the structure of the theorem, we examine two primary cases: \textbf{(1)} nodes within the neighborhood of nodes involved in events and \textbf{(2)} nodes outside this neighborhood with $d(u, i) > L$. Figure~\ref{fig:bound_emperical_theo} illustrates the results in Log-Scale, highlighting the difference between the analytical upper bound and the empirical values observed when an event occurs.

We observe that the bounds are notably tighter in cases where $L \geq d(u, i)$, as the representation differences are primarily influenced by the temporal projection component, in line with our theoretical analysis. However, for nodes within the neighborhood of the event, the proposed upper bound is somewhat looser, likely due to simplifying assumptions made in the theoretical derivation.

\begin{figure}
    \centering
    \includegraphics[width=0.6\linewidth]{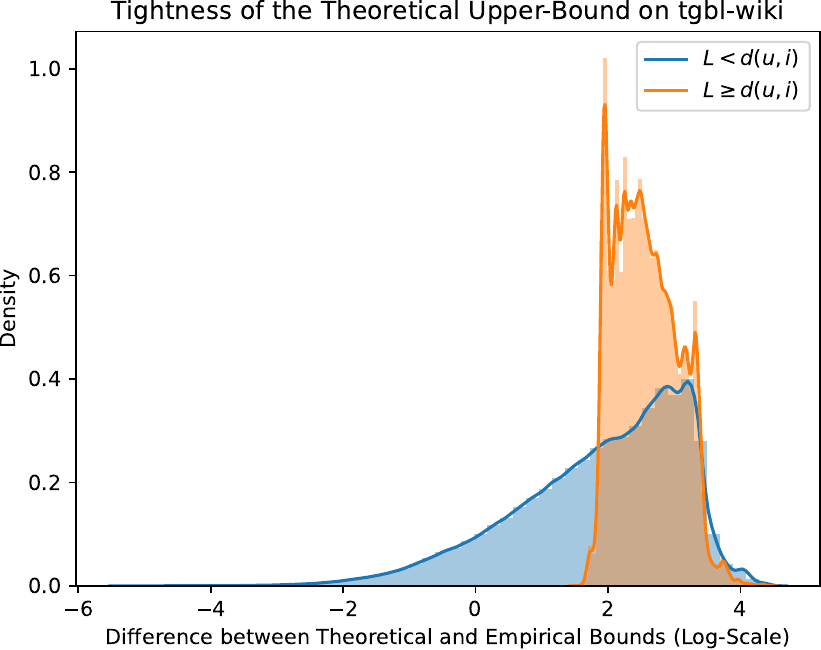}
    \caption{Difference between the theoretical upper-bound and the empirical values of the difference in norms for a node $u$'s representation.}
    \label{fig:bound_emperical_theo}
\end{figure}

\section{Empirical Insights of the Information-Flow}
As discussed in Section \ref{sec:info_flow}, the concept of Information-Flow serves as a metric for assessing the influence of an event on the graph’s node embeddings. In particular, when a new edge is added (i.e., an event occurs), it is expected that the node embeddings will be updated accordingly. If no update is observed, for example, in an edge prediction scenario, the corresponding probabilities remain unchanged, indicating that the event provides no additional information. Ideally, the embeddings of all nodes in the graph should be updated in response to the event. 

From the theoretical results, one key determinant of improved Information-Flow is the number of message-passing layers. To illustrate this, the normalized average norm difference of node embeddings is evaluated following an event in the TGBL-Wiki dataset, while varying the number of message-passing layers. Specifically, the TGAT model is considered with different configurations ($L = 1, 2, 3$) during inference. As depicted in Figure \ref{fig:plot_info_flow} (left axis), models with a larger number of layers exhibit a greater degree of information propagation across the graph. 

\begin{figure}
    \centering
    \includegraphics[width=0.6\linewidth]{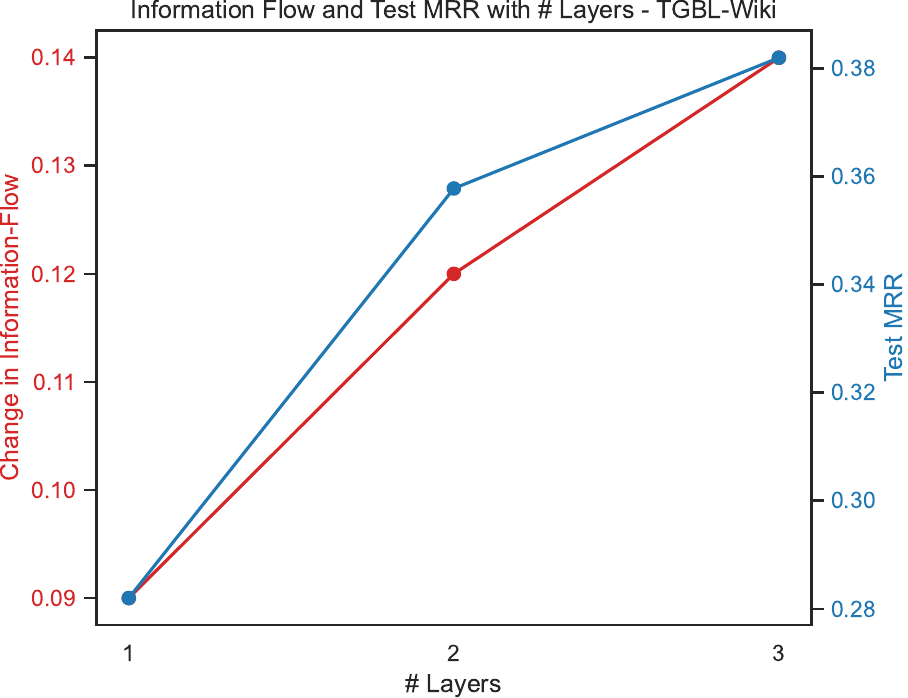}
    \caption{Information Flow change after an event occurring and the Average Test MRR on the TGBL-Wiki dataset for different number of layers.}
    \label{fig:plot_info_flow}
\end{figure}

However, it should be noted that although increased information flow is generally desirable, it does not necessarily lead to superior downstream performance. This discrepancy is often attributed to over-smoothing, a phenomenon well-documented in the literature. In certain graph topologies, such as bipartite structures, a model with a smaller number of layers can outperform deeper models. Figure \ref{fig:plot_info_flow} (right axis) further reports the Average Test MRR for the same TGAT models, highlighting this effect.

\section{Visual Summary of the Theoretical Insights}\label{appndix:visual_summary}

Our work position itself as a review of the available methods in the context of CTDGs. Specifically, our primary goal is to offer a unifying perspective on CTDG methods, especially in self-supervised settings, so that given a graph structure and constraints, the user can choose the right method to be used to maximize the performance. In this perspective, our provided "Information-Flow" in Section \ref{sec:info_flow} studies how a method's components can affect its ability to propagate and update information after an event occurrence. This is relevant to understand how different methods shall interact with different topologies. In this perspective, Table \ref{tab:temporal_graph_components} provides a visual summary of the different insights that were derived from our theoretical analysis and the computed upper bounds. Specifically, it highlights the key components influencing the “expressivity” of a temporal graph function, detailing their functional roles, associated mathematical constraints, and practical implications. The table summarize how factors such as memory usage, projection mechanisms, neighborhood depth, event features, and attention mechanisms contribute to the propagation of information over time, while also delineating the trade-offs in computational complexity and model stability.

\begin{table}[h]
    \centering
    \resizebox{\textwidth}{!}{%
    \begin{tabular}{l p{5cm} p{4cm} p{5cm}}
        \toprule
        \textbf{Component} & \textbf{Description} & \textbf{Bound Terms} & \textbf{Practical Effect} \\
        \midrule
        \textbf{Memory Usage} & Tracks historical interactions for each node. & $\Delta(s_i)$: Change in memory state for nodes involved in events. & Improves temporal dependency modeling; sensitive to memory size and update mechanism. \\
        \midrule
        \textbf{Temporal Projection} & Projects node representations forward during inactivity. & $|W_t|$: Temporal projection weight matrix norm. & Captures time-varying node states but adds sensitivity to temporal sparsity. \\
        \midrule
        \textbf{Aggregator Type} & Mechanism for combining neighborhood information (e.g., summation, attention). & Aggregator-dependent: $\prod_{l=1}^{L} |W^{(l)}|$ for GCN, degree-based terms for attention. & Influences expressivity and computational cost; enables fine-grained updates. \\
        \midrule
        \textbf{Neighborhood Depth} ($L$) & Number of layers in the message-passing scheme, affecting the propagation range. & $\hat{w}_u, \hat{w}_j$: Normalized walks or shortest paths. & Controls the range of information flow but risks over-smoothing or over-squashing. \\
        \midrule
        \textbf{Event Features} & Includes edge-related attributes (e.g., interaction type, timestamps). & $|W_t| + \mathcal{B}$: Interaction with edge features through projection and event-related adjustments. & Enhances representation with context-specific interactions but may increase model complexity. \\
        \midrule
        \textbf{Attention Mechanism} & Dynamically weighs neighbor contributions using learnable parameters. & Degree-based terms ($deg(u)$) and learned weights ($W$). & Enables selective information propagation, particularly useful for graphs with heterogeneous links. \\
        \midrule
        \textbf{Graph Topology} & Structural characteristics such as sparsity, degree distribution, and community structures. & $\hat{w}_u, \hat{w}_j$: Impact of topology on walk-based terms. & Affects long-range dependency modeling and suitability for specific graph types (e.g., bipartite). \\
        \bottomrule
    \end{tabular}
    }
    \caption{Visual Representation of the theoretical insights from the computer upper-bounds in Theorem \ref{thm_attention} and \ref{thm_gcn}.}
    \label{tab:temporal_graph_components}
\end{table}

\section{Datasets and Implementation Details} 

\subsection{Synthetic Datasets}\label{appendix:synthetic_data}

To perform further experimentation and to validate our theoretical insights, we evaluated our considered benchmarks on a number of graph types. Since finding accessible real datasets has been a challenge, we have opted to rather use synthetic and generated datasets in which we can control the different settings and confirm our hypothesis. In the following section, we describe in details our data generation process and we give additional information to easily reproduce this process.

\paragraph{Long-Range Benchmark}
Long-range graphs require long-range reasoning to achieve high performance, as interactions between distant nodes or graph regions can significantly influence outcomes. To effectively capture these relationships, specific information propagation strategies are necessary, as standard message-passing techniques often fall short in handling these long-range dependencies.

Our objective is to evaluate the performance of the reviewed benchmarks in handling this category of graphs. Following the methodology from prior work~\citep{pmlr-v235-gravina24a}, we introduce a temporal component to the static PascalVOC-SP graph from the Long-range benchmark dataset~\citep{dwivedi2022long}. This dataset comprises graphs generated from images, where each node represents a region within the image that belongs to a particular class. The temporal dimension is introduced by assuming an ordered sequence in node appearance, progressing from the top-left to the bottom-right of the image. For further details, we refer readers to the original work~\citep{pmlr-v235-gravina24a}.

\paragraph{Community-based Graphs.} One part of our experimental evaluation consists of considering a graph which is based on communities so as to assess the method's ability to propagate information within different clusters. In this perspective, we first generate $B$ blocks of dense communities with $N_B$ nodes in each community, resulting in $N$ nodes in separate disconnected components with a connection density of $p_{i,i} \forall $. At each timestep $t$, we select $k_c = 4$ pairs of communities that we (pairwise) connect over a random number $t_{gen, k_c} \in [6,20]$. Specifically, this selection follows a normalized density of $\frac{p}{10} = 0.025$, normalized over a sampled time horizon $t_{gen, k_c}$ over which we connect the pair of clusters. Naturally, as time progresses, the graph becomes progressively more connected, and its topology evolves accordingly. The used parameters are reported in Table \ref{table:synth-data-vertical}. For the evaluation, we leverage the normalized sampling densities between cluster pairs, derived from the data generation procedure, to identify which communities are most likely to be connected. Based on this, we formulate the task of ranking nodes within the active clusters and evaluate the results using Normalized Discounted Cumulative Gain (NDCG)~\citep{jarvelin2002cumulated}. We choose the NDCG here, as we know which communities should be more and less likely to be connected during that specific timestep and their \textit{true} relevance score as they are directly derived from the sampling of stochastic block model used.

\begin{table}[]
    \centering
    \caption{Parameters for generating the stochastic block model dataset. }
    \label{tab:synth-settings}
    \begin{tabular}{llllll}
\toprule
  B &    N & $N_B$ & $p_{i,j}$ & $p_{i,i}$ & $t_{gen,kc}$ \\
\midrule
100 & 1000 &  30 &             0.025 &             0.25 &  [6, 20) \\
\bottomrule
\end{tabular}
    
\end{table}

\textbf{Bi-Partite Datasets.} For this category of graphs, we have focused on using the real world graph TGBL-Wiki dataset, which is part of the TGB benchmark datasets~\citep{huang2024temporal}. The dataset is a bi-partite network where Wiki pages and editors are nodes, while an edge is added when a user edits a page at a specific timestamp. 

\textbf{Homogeneous Graphs.} We additionally experimented with homogeneous graphs, i.e. graphs where nodes and edges belong to the same type. This type of graph encompasses a wide range of applications, including social networks, transportation systems, and other domains where graph structures are uniform. For this evaluation, we used an adaptation of the real world TGBL-Coin dataset which is part of the TGB benchmarks~\citep{huang2024temporal} too. This dataset contains cryptocurrency transactions, where each node is an address and each edge denotes the transfer of funds from one address to another at a certain time. The results are reported in Table \ref{table:homogeneous_graph}.

For both the real and synthetic datasets, characteristics and information about the graphs utilized in the experimental results of the study are presented in Table \ref{tab:data_statistics}.

\begin{table}[h]
\caption{Statistics of the graph datasets used in our experiments.}
\label{tab:data_statistics}
\vskip 0.15in
\begin{center}
\begin{small}
\begin{sc}
\begin{tabular}{lcccc}
\toprule
Dataset & \#Nodes & \#Edges & \#Edge Features & \#Surprise Index \\
\midrule
T-PascalVOC 10    & 2,671,704 & 2,660,352 & 14 & 1.0 \\
T-PascalVOC 30    & 2,990,466 & 2,906,113 & 14 & 1.0 \\
TGBL-Wiki    & 9,227 & 157,474 & 172 & 0.108 \\
SBM Graph    & 3,000 &  172,570 & 0 & 1.0 \\
\midrule
\end{tabular}
\end{sc}
\end{small}
\end{center}
\end{table}

\begin{table}[h]
\centering
\caption{Average Test and Validation MRR ($\pm$ denotes standard deviation) of the different approaches on an adaptation of the real-world dataset TGBL-Coin. Best performance in \textbf{bold}.}
\label{table:homogeneous_graph}
\begin{tabular}{lcc}
\hline
\multirow{2}{*}{\textbf{Model}} & \multicolumn{2}{c}{\textbf{Homogeneous Graph}} \\ \cline{2-3} 
                                 & \textbf{Test}              & \textbf{Val}               \\ \hline
 JODIE                           & 0.713  $\pm$ 0.028         & 0.552 $\pm$ 0.131          \\
 DyRep                           & 0.688 $\pm$ 0.007          & 0.697 $\pm$ 0.007          \\
 TGN                             & 0.738 $\pm$ 0.007          & 0.731 $\pm$ 0.006          \\
 TGAT                            & \textbf{0.748 $\pm$ 0.025} & \textbf{0.765 $\pm$ 0.023} \\
 CTAN                            & 0.709 $\pm$ 0.003          & 0.754 $\pm$ 0.006          \\ \hline
\end{tabular}
\end{table}

\subsection{Experimental Details} \label{appendix:experimental_details}
For our experimental setting on the \textbf{long-range datasets}, we used the Adam optimizer~\citep{kingma_adam} with a 1e-04 learning rate and a batch size of 256 for training, validation and testing. We set the patience to 20 and we optimize the F1-Score, and the learning consisted of 20 epochs. All the experiments were run 5 times to reduce the effect of initialization and other randomization parameters. For the model’s parameters, we directly used the best parameters reported in previous work~\citep{dwivedi2022long}. 

For our experimental setting on the real-world \textbf{bi-partite dataset} TGBL-Wiki, we used the Adam optimizer with a 1e-04 learning rate and a batch size of 200 for training, validation and testing. We set the patience to 20 and we optimize the Mean Reciprocal Rank (MRR) score, and the learning consisted of 50 epochs. All the experiments were run 5 times to reduce the effect of initialization and other randomization parameters. The number of GNN layers is $L = 1$, while the number of heads (for multi-head attention) is 1 (for CTAN) or 2 (for TGAT and TGN). For CTAN, epsilon is 1.0 and gamma is 0.1, as reported in ~\citet{pmlr-v235-gravina24a}. The rest of the configuration is taken from the TGB codebase: the weight decay penalty is 0, the embedding/time/memory dimension is 100, the neighborhood sampler size is 10, and evaluation is done with the TGB evaluator.

Finally, for our experiments on the \textbf{community-based} SBM dataset and \textbf{homogeneous} adapted TGBL-Coin dataset, we have set the number of layers to $L = 1$ and a unique attention head. Similarly to the previous experiment, we have used the Adam optimizer~\citep{kingma_adam} with a 1e-04 learning rate and a batch size of 200. All the models were trained for $30$ epochs with the patience set to 5, and we have used a hidden dimension of $100$. All the experiments were run 5 times to reduce the effect of initialization and other randomization parameters. For CTAN, epsilon is 1.0 and gamma is 0.1, as reported in ~\citet{pmlr-v235-gravina24a}.

The source code necessary to reproduce our experiments is publicly available at: \url{https://github.com/king/ctdg-info-flow}

\end{document}